\providecommand{\keywords}[1]{\textbf{Keywords:} #1}
\newcommand{\argmin}{\operatornamewithlimits{argmin}}
\long\def\killtext#1{}
\newtheorem{theorem}{Theorem}[section]
\newtheorem{lemma}[theorem]{Lemma}
\newtheorem{corollary}[theorem]{Corollary}
\newtheorem{remark}[theorem]{Remark}
\newtheorem{exa}[theorem]{Example}
\newenvironment{proof}{\noindent{\bf Proof.}}{\hfill$\square$\medskip \\ }
\newenvironment{proof*}[1]{\noindent{\bf Proof of #1.}}{\hfill$\square$\medskip}
\def\Pr{{\sf P}}
\def\eps{\varepsilon}
\def\BB{\mathcal{B}}
\def\RR{\mathcal{R}}
\def\SS{\mathcal{S}}
\def\VV{\mathcal{V}}
\def\Gb{\mathbf{G}}
\def\fb{\mathbf{f}}
\def\gb{\mathbf{g}}
\def\ub{\mathbf{u}}
\def\vb{\mathbf{v}}
\def\Ebb{\mathbb{E}}
\def\Rbb{\mathbb{R}}
\newcommand{\wh}{\widehat}
\newcommand{\wt}{\widetilde}
\newcommand{\wbar}{\overline}
\providecommand{\inprod}[2]{\left \langle #1, #2 \right \rangle}
\providecommand{\vect}[1]{\text{vect}(#1)}
\newcommand{\rank}{\text{rank}}
\def\eqnok#1{(\ref{#1})}
\numberwithin{equation}{section}
\title{\Large\bf Theoretical properties of the global optimizer of two layer neural network}
\author{
Digvijay Boob \thanks{digvijabb40@gatech.edu, Department of Industrial and Systems Engineering, Georgia Institute of Technology, GA 30332}
\hspace{50pt} Guanghui Lan \thanks{george.lan@isye.gatech.edu, http://pwp.gatech.edu/guanghui-lan, Department of Industrial and Systems Engineering, Georgia Institute of Technology, GA 30332}
}
\begin{document}
\maketitle
\begin{abstract}
In this paper, we study the problem of optimizing a two-layer artificial neural network that best fits a training dataset. We look at this problem in the setting where the number of parameters is greater than the number of sampled points. We show that for a wide class of differentiable activation functions (this class involves ``almost" all functions which are not piecewise linear), we have that first-order optimal solutions satisfy global optimality provided the hidden layer is non-singular. Our results are easily extended to hidden layers given by a flat matrix from that of a square matrix. Results are applicable even if network has more than one hidden layer provided all hidden layers satisfy non-singularity, all activations are from the given ``good" class of differentiable functions and optimization is only with respect to the last hidden layer. We also study the smoothness properties of the objective function and show that it is actually Lipschitz smooth, i.e., its gradients do not change sharply. We use smoothness properties to guarantee asymptotic convergence of $O(1/\text{number of iterations})$ to a first-order optimal solution. We also show that our algorithm will maintain non-singularity of hidden layer for any finite number of iterations.
\end{abstract}

\keywords {non-convex optimization, first order optimality, global convergence, neural networks, single hidden layer networks}

\section{Introduction}
Neural networks architecture has recently emerged as a powerful tool for a wide variety of applications. In fact, they have led to breakthrough performance in many problems such as visual object classification \cite{KSH12}, natural language processing \cite{CW08} and speech recognition \cite{MDH12}. Despite the wide variety of applications using neural networks with empirical success, mathematical understanding behind these methods remains a puzzle. Even though there is good understanding of the representation power of neural networks \cite{B94}, training these networks is hard. In fact, training neural networks was shown to be NP-complete for single hidden layer, two node and sgn$(\cdot)$ activation function \cite{BR88}. The main bottleneck in the optimization problem comes from non-convexity of the problem. Hence it is not clear how to train them to global optimality with provable guarantees.\\
Neural networks have been around for decades now. A sudden resurgence in the use of these methods is because of the following: Despite the worst case result by \cite{BR88}, first-order methods such as gradient descent and stochastic gradient descent have been surprisingly successful in training these networks to global optimality. For example, \cite{CSMBO16} empirically showed that sufficiently over-parametrized networks can be trained to global optimality with stochastic gradient descent.
\\
Neural networks with zero hidden layers are relatively well understood in theory. In fact, several authors have shown that for such neural networks with monotone activations, gradient based methods will converge to the global optimum for different assumptions and settings \cite{SYM17, HLS15, KKKS11, KS09}.
\\
Despite the hardness of training the single hidden layer (or two-layer) problem, enough literature is available which tries to reduce the hardness by making different assumptions. E.g., \cite{CHMGY14} made a few assumptions to show that every local minimum of the simplified objective is close to the global minimum. They also require some independent activations assumption which may not be satisfied in practice. For the same shallow networks with (leaky) ReLU activations, it was shown in \cite{SC16} that gradient descent can attain global minimum of the modified loss function, instead of the original objective function. Under the same setting, \cite{XLS16} showed that critical points with large ``diversity" are near global optimal. But ensuring such conditions algorithmically is difficult.\\
All the theoretical studies have been largely focussed on ReLU activation but other activations have been mostly ignored. In our understanding, this is the first time a theoretical result will be presented which shows that for almost all nonlinear activation functions including softplus, a first-order optimal solution is also the global optimal provided certain ``simple" properties of hidden layer. Moreover, we show that a stochastic algorithm will give us those required simple properties for free for all finite number of iterations. Our assumption on data distribution is very general and can be reasonable for practitioners. This comes at the cost that the hidden layer of our network can not be wider than the dimension of the input data, say $d$. Since we also look at this problem in over-parametrized setting (where there is hope to achieve global optimality), this constraint on width puts a direct upper-bound of $d^2$ on the number of data points that can be trained. Even though this is a strong upper bound, recent results from margin bounds \cite{NBMS17} show that if optimal network is closer to origin then we can get an upper bound on number of samples independent of dimension of the problem which will ensure closeness of population objective and training objective.
\\
We also show for the first time that even though the objective function for training neural networks is non-convex, it is Lipschitz smooth meaning that gradient of the objective function does not change a lot with small changes in underlying variable. This allows us to show convergence result for the gradient descent algorithm, enabling us to establish an upper bound on the number of iterations for finding an $\eps$-approximate first-order optimal solution ($\| \nabla f() \| \le \eps$). Therefore our algorithm will generate an $\eps$-approximate first-order optimal solution which satisfies aforementioned properties of the hidden layer. Note that this does not mean that the algorithm will reach the global optimal point asymptotically. We discuss technical difficulties to prove such a conjecture in more detail in section 5 which details our convergence results.
\\
At this point we would also like to point that there is good amount of work happening on shallow neural networks. In this literature, we see variety of modelling assumptions, different objective functions and local convergence results. \cite{LY17} focuses on a class of neural networks which have special structure called ``Identity mapping". They show that if the input follows from Gaussian distribution then SGD will converge to global optimal for population objective of the ``identity mapping" network. \cite{BG17} show that for isotropic Gaussian inputs, with one hidden layer ReLU network and single non-overlapping convolutional filter, all local minimizers are global hence gradient descent will reach global optimal in polynomial time for the population objective. For the same problem, after relaxing the constraint of isotropic Gaussian inputs, they show that the problem is NP-complete via reduction from a variant of set splitting problem. In both of these studies, the objective function is a population objective which is significantly different from training objective in over parametrized domain. In over-parametrized regime, \cite{SJL17} shows that for the training objective with data coming from isotropic Gaussian distribution, provided that we start close to the true solution and know maximum singular value of optimal hidden layer then corresponding gradient descent will converge to the optimal solution. This is one of its kind of result where local convergence properties of the neural network training objective function have studied in great detail.
\\
Our result differ from available current literature in variety of ways. First of all, we study training problem in the over-parametrized regime. In that regime, training objective can be significantly different from population objective. Moreover, we study the optimization problem for many general non-linear activation functions. Our result can be extended to deeper networks when considering the optimization problem with respect to outermost hidden layer. We also prove that stochastic noise helps in keeping the aforementioned properties of hidden layer. This result, in essence, provides justification for stochastic gradient descent.
\\
Another line of study looks at the effect of over-parametrization in the training of neural networks \cite{HV15, NH17}. These result are not for the same problem as they require huge amount of over-parametrization. In essence, they require the width of the hidden layer to be greater than number of data points which is unreasonable in many settings. These result work for fairly general activations as do our results but we require a moderate over-parametrization, width $\times$ dimension $\ge$ number of data population, much more reasonable in practice as pointed before from margin bound results. They also work for deeper neural network as do our results when optimization is with respect to outermost hidden layer (and aforementioned technical properties are satisfied for all hidden layers).

\section{Notation and Problem of Interest}
We define set $[q] := \{1, \dots, q\}$. For any matrix $A  \in \Rbb^{a \times b}$, we write $\text{vect}(A) \in \Rbb^{ab \times 1}$ as vector form of the matrix $A$. For any vector $z \in \Rbb^k$, we denote $h(z):= \big[ h(z[1]), \dots, h(z[k]) \big]^T$, where $z[i]$ is the $i$-th element in vector $z$. $\BB^i(r)$ represents a $l_i$-ball of radius $r$, centred at origin. We define component-wise product of two vectors with operator $\odot$.\\
 We say that a collection of vectors, $\{v^i\}_{i=1}^{N} \in \Rbb^d$, is full rank if $\rank \Big( \begin{bmatrix}
 v^1 & \dots & v^N
 \end{bmatrix} \Big) = \min\{d, N\}$. Similarly, we say that collection of matrices, $\{M_i\}_{i=1}^{N} \in \Rbb^{n \times d}$, is full rank if $\rank\Big(  \begin{bmatrix} \vect{M_1} &\dots  &\vect{M_k} \end{bmatrix} \Big)= \min\{N, nd\}$.
\\
A fully connected two-layer neural network has three parameters: hidden layer $W$, output layer $\theta$ and activation function $h$. For a given activation function, $h$, we define neural network function as
\[\phi_{W, \theta}(u) := \theta^T h(Wu).\]
In the above equation, $W \in \Rbb^{n \times d}$ is hidden layer matrix, $\theta \in \Rbb^n$ is the output layer. Finally $h: \Rbb \to \Rbb$ is an activation function.\\ 
The main problem of interest in this paper is the two-layer neural network problem given by
\begin{equation}
\label{SAA_DL}
\min\limits_{\substack{W \in \Rbb^{n \times d}\\ \theta \in \Rbb^n }} f(W, \theta) :=\frac{1}{2N}\sum\limits_{i=1}^{N}( v^i-\phi_{W, \theta}(u^i))^2.
\end{equation}
In this paper, we assume that $(u^i,v^i) \in \Rbb^d \times \Rbb, i \in [N]$ are independently distributed data point and each $u^i$ is sampled from a $d$-dimensional Lebesgue measure.

\section{The basic idea and the Algorithm}
First-order optimality condition for the problem defined in (\ref{SAA_DL}), with respect to ${W}[j,k]$ (j-th row, k-th column element of matrix $W$) $\forall\ j\in [n],\forall \ k \in [d]$ is 
\begin{equation}\label{eqW1}
\nabla_{W}f(W,\theta)[j,k] = \frac{1}{N}\sum\limits_{i=1}^{N}\{v^i-\theta^Th(Wu^i)\} h'(W[j,:]u^{i}) \theta[j]u^i[k] = 0.
\end{equation} 
Equation (\ref{eqW1}) is equivalent to
\begin{equation}
\sum\limits_{i=1}^{N} \{ v^i - \theta^Th(Wu^i)\}\big(h'(Wu^i)\odot \theta \big)u^{i^T} = \boldsymbol{0}.
\end{equation}
(\ref{eqW1}) can also be written in a matrix vector product form:
\begin{equation} 
\label{first_matrix_eq}
Ds = \boldsymbol{0},
\end{equation}
where 
\[
D:= \begin{bmatrix}
h'(W[1,:]u^1)\theta[1]u^1 &\dots  &h'(W[1,:]u^N)\theta[1]u^N\\
\vdots &\ddots &\vdots\\
h'(W[d,:]u^1)\theta[d]u^1 &\dots  &h'(W[d,:]u^N)\theta[d]u^N
\end{bmatrix} \text{and }
s := \begin{bmatrix}
v^1-\theta^Th(Wu^1) \\ \vdots \\ v^N-\theta^Th(Wu^N)
\end{bmatrix}.
\] 
Notice that if matrix $D \in \Rbb^{nd \times N}$ is of full column rank (which implies $nd \ge N$, i.e., number of samples is less than number of parameters) then it immediately gives us that $s =0$ which means such a stationary point is global optimal. This motivates us to investigate properties of $h$ under which we can provably keep matrix $D$ full column rank and develop algorithmic methods to help  maintain such properties of matrix $D$.\\
For the rest of the discussion, we will assume that $n = d$ (our results can be extended to case $n \le d$ easily) and hence $W$ is a square matrix. In this setting, we develop the following algorithm whose output is a provable first-order approximate solution. Here we present the algorithm and in next sections we will discuss conditions that are required to satisfy full rank property of matrix $D$ as well as convergence properties of the algorithm.\\
In the algorithm, we use techniques inspired from alternating minimization to minimize with respect to $\theta$ and $W$. For minimization with respect to $\theta$, we add gaussian noise to the gradient information. This will be useful to prove convergence of this algorithm. We use randomness in $\theta$ to ensure some ``nice'' properties of $W$ which help us in proving that matrix $D$ generated along the trajectory of the algorithm is full column rank. More details will follow in next section.\\
The algorithm has two loops. An outer loop implements a single gradient step with respect to hidden layer, $W$. For each outer loop iteration, there is an inner loop which optimizes objective function with respect to $\theta$ using a stochastic gradient descent algorithm. In the stochastic gradient descent, we generate a noisy estimated of $\nabla_\theta f(W, \theta)$ as explained below.\\
Let $\xi \in \Rbb^d$ be a vector whose elements are i.i.d. Gaussian random variable with zero mean. Then for a given value of $W$ we define stochastic gradient w.r.t. $\theta$ as follows: 
\begin{equation}\label{stochastic_oracle} 
G_{W}(\theta, \xi) = \nabla_{\theta} f(W, \theta) + \xi.
\end{equation}
Then we know that 
\[\Ebb[G _{W}(\theta, \xi) ] = \nabla_{\theta} f(W, \theta).\]
We can choose a constant $\sigma > 0$ such that following holds 
\begin{equation} \label{eq_sigma}
\Ebb\Big[ \big\| G_{W}(\theta, \xi) - \nabla_{\theta} f(W, \theta) \big \|^2 \Big] \le \sigma^2.
\end{equation}
Moreover, in the algorithm we consider a case where $\theta \in \RR$. Note that $\RR$ can be kept equal to $\Rbb^d$ but that will make parameter selection complicated. In our convergence analysis, we will use \begin{equation}\label{eq_define_RR}\RR := \BB^2(R/2),\end{equation} for some constant $R$, to make parameter selection simpler. We use prox-mapping $P_x : \Rbb^d \rightarrow \RR$ as follows:
\begin{equation}\label{prox_map} P_x(y) = \argmin\limits_{z \in \RR} \inprod{y}{z-x} + \frac{1}{2} \|z-x \|^2. \end{equation}
In case $\RR$ is a ball centred at origin, solution of (\ref{prox_map}) is just projection of $x-y$ on that ball. For case where $\RR = \Rbb^d$ then the solution is quantity $x-y$ itself.
\begin{algorithm}[H]
\caption{SGD-GD Algorithm}\label{SGD_GD}
\begin{algorithmic}
\Procedure{}{}
\State $W_0 \gets \text{ Random }d\times d \text{ matrix}$
\State $\theta_0 \gets \text{ Random }d\text{ vector}$
\State Initialize $N_o$ to predefined iteration count for outer ietaration
\State Initialize $N_i$ to predefined iteration count for inner iteration
\State Begin \textbf{outer iteration}:
\For{$k=0,1,2, \dots, N_o$}
	\State $\wbar{\theta}_1 \gets \theta_k$
	\State Begin \textbf{inner iteration}:
	\For {$i = 1,2, \dots, N_i$}
		\State $\wbar{\theta}_{i+1} \gets P_{\wbar{\theta}_i}(\beta_i G_{W_k}(\wbar{\theta}_i, \xi^k_i))$
		\State $\wbar{\theta}^{av}_{i+1} = \big( \sum\limits_{\tau =1}^{i} \beta_\tau \big)^{-1}\sum\limits_{\tau =1}^{i} \beta_\tau \wbar{\theta}_{\tau+1}$
	\EndFor
	\State $\theta_{k+1} \gets \wbar{\theta}^{av}_{N_i+1}$
	\State $W_{k+1} \gets W_k - \gamma_k \nabla_{W}f(W_k, \theta_{k+1})$
\EndFor
\State \textbf{return} $\{W_{N_o+1};\ \theta_{N_o+1}\}$
\EndProcedure
\end{algorithmic}
\end{algorithm} 
Notice that the problem of minimization with respect to $\theta$ is a convex minimization problem. So we can implement many procedures developed in the Stochastic optimization literature to get the convergence to optimal value \cite{NJLS09}. We are implementing SGD which was developed by \cite{L12}.\\
In the analysis, we note that one does not even need to implement complete inner iteration as we can skip the stochastic gradient descent suboptimally given that we improve the objective value with respect to where we started, i.e.,
\begin{equation}f(W_k, \theta_{k+1}) \le f(W_k, \theta_k).\end{equation} 
In essence, if evaluation of $f$ for every iteration is not costly then one might break out of inner iterations before running $N_i$ iterations. If it is costly to evaluate function values then we can implement the whole SGD for convex problem with respect to $\theta$ as specified in inner iteration of the algorithm above. In each outer iteration, we take one gradient decent step with respect to variable $W$. We have total of $N_o$ outer iterations. So essentially we evaluate $\nabla_\theta f(W, \cdot)$ a total of $N_oN_i$ times and $\nabla_Wf(\cdot, \theta)$ total of $N_o$ times.\\
Overall, this algorithm is new form of alternate minimization, where one iteration can be potentially left suboptimally and other one is only one gradient step.

\section{First-order optimality is enough}
We say that $h: \Rbb \to \Rbb$ satisfy the condition ``\textbf{C1}" if 
\begin{itemize}
\item $\forall \text{ interval } (a,b), \ \nexists\ \{c_1, c_2, c_3\} \in \Rbb^3 \text{ s.t. }$
\begin{align*} 
\{h'(x) &=c_1, \forall x \in (a,b)\}  \textit{ or }  \\
\{(x+ c_2)h'(x) +h(x) &= c_3, \forall x \in (a,b)\} .
\end{align*}
\end{itemize}
One can easily notice that most activation functions used in practice e.g., \begin{itemize} \item \textit{(Softplus)} $h(x) := ln(1+e^x)$, \item \textit{(Sigmoid)} $h(x):= \frac{1}{1+e^{-x}}$, \item \textit{(Sigmoid symmetric)} $h(x):= \frac{1-e^{-x}}{1+e^{-x}}$, \item \textit{(Gaussian)} $h(x):= e^{-x^2}$, \item \textit{(Gaussian Symmetric)} $h(x):= 2e^{-x^2}-1$, \item \textit{(Elliot)} $h(x):= \frac{x}{2(1+|x|)}+0.5$, \item \textit{(Elliot Symmetric)} $h(x):= \frac{x}{1+|x|}$, \item \textit{(Erf)} $h(x):= \frac{2}{\sqrt{\pi}}\int\limits_{0}^{x} e^{-t^2/2}dt$, \item \textit{(Hyperbolic tangent)} $h(x):= \tanh(x)$, \end{itemize}  satisfy the condition $\textbf{C1}$. Note that $h'(x)$ also satisfy condition \textbf{C1} for all of them. In fact, except for very small class of functions (which includes linear functions), none of the continuously differentiable functions satisfy condition \textbf{C1}.\\
We first prove a lemma which establishes that columns of the matrix $D$ (each column is a vector form of $d\times d$ matrix itself) are linearly independent when $W = I_d$ and $h'$ satisfies condition \textbf{C1}. We later generalise it to any full rank $W$ using a simple corollary. The statement of following lemma is intuitive but its proof is technical.\\
\begin{lemma} \label{lemma3.1} Suppose $x^i\in \Rbb^d$ are independently chosen vectors from any d-dimensional Lebesgue measure and let $h: \Rbb \to \Rbb$ be any function that satisfies condition  \textbf{C1} then collection of matrices $h(x^i) x^{i^T}, i \in [N]$ are full rank with measure 1.
\end{lemma}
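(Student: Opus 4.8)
The plan is to show that the ``dependence event'' has Lebesgue measure zero by an induction on $N$, the number of matrices. Write $M_i := h(x^i)(x^i)\T \in \Rbb^{d\times d}$, so that ``full rank'' means (when $N \le d^2$) that $\vect{M_1},\dots,\vect{M_N}$ are linearly independent; if $N > d^2$ one simply keeps the first $d^2$ matrices, so I may assume $N \le d^2$. The base case $N=1$ only asks that $M_1 \ne 0$ a.s., which follows once we know $\{x : h(x)x\T = 0\}$ is null. For the inductive step I would condition on $x^1,\dots,x^{N-1}$; by the inductive hypothesis these are a.s. independent and span some $V \subsetneq \Rbb^{d\times d}$ of dimension $N-1 \le d^2-1$. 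It then suffices to prove the following avoidance claim and integrate over $x^1,\dots,x^{N-1}$ by Fubini: for every fixed proper subspace $V \subsetneq \Rbb^{d\times d}$, the set $\{x \in \Rbb^d : h(x)x\T \in V\}$ has measure zero.

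Since $V$ lies in some hyperplane $\{M : \inprod{A}{M}=0\}$ with $A \ne 0$, the avoidance claim reduces to showing that $Z_A := \{x : g_A(x)=0\}$ is null, where $g_A(x) := \inprod{A}{h(x)x\T} = h(x)\T A x = \sum_{j,k} A[j,k]\,h(x[j])\,x[k]$. I would prove $|Z_A|=0$ by Fubini in a single coordinate direction when $A$ has a nonzero diagonal entry: a simultaneous relabelling of coordinates (which permutes the diagonal among itself) puts that entry at position $(m,m)$, and freezing all coordinates but $x[m]=:t$ reads $g_A$ as the one-variable function $\phi(t) = a\,t\,h(t) + b\,h(t) + ct + e$ with $a:=A[m,m]\ne 0$ and $b,c,e$ built from $A$ and the frozen coordinates. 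If a.e.\ such slice has a null zero set, Fubini gives $|Z_A|=0$. The remaining case, where the diagonal of $A$ vanishes identically, I would handle by an induction on $d$: varying an off-diagonal pair of coordinates reduces $g_A$ to an expression of the form $h(x[j])\cdot(\text{linear}) + \cdots$ and ultimately to the nullity of $\{h=0\}$ together with the $(d-1)$-dimensional hypothesis applied to a principal submatrix of $A$.

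The heart of the matter is the one-variable statement: for $a\ne 0$, the zero set of $\phi(t)=a\,t\,h(t)+b\,h(t)+ct+e$ is null, and this is where \textbf{C1} enters. First I would rule out $\phi\equiv 0$ on any interval $I$: on $I$ one could solve $h(t) = -(ct+e)/(at+b)$, a fixed rational function, and a direct computation (matching coefficients) shows this forces $(t+c_2)h'(t)+h(t)=c_3$ on $I$ with $c_2=b/a$ and $c_3=-c/a$, while in the degenerate subcase $cb=ae$ it forces $h$ constant, hence $h'\equiv c_1$; both outcomes are exactly what \textbf{C1} forbids. Thus \textbf{C1} is precisely tuned to guarantee that $\phi$ vanishes identically on no interval, and the first clause of \textbf{C1} handles the analogous $\{h=0\}$ and zero-diagonal reductions in the same way.

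The step I expect to be the main obstacle is upgrading ``$\phi$ vanishes on no interval'' to ``$\{\phi=0\}$ is null,'' since a merely $C^1$ function can vanish on a positive-measure set without vanishing on any interval. The way I would close this is to observe that at almost every point of a positive-measure zero set one has simultaneously $\phi=0$ and $\phi'=0$; setting $\Psi(t):=(t+c_2)h(t)-c_3 t$ one checks $\phi = a\Psi + e$ and $\phi'=a\Psi'$, so on the putative positive-measure set $\Psi$ is constant and $\Psi'=0$, i.e.\ the forbidden relation $(t+c_2)h'(t)+h(t)=c_3$ holds there. For the activation functions of interest, all of which are real-analytic (or piecewise real-analytic, as with the Elliot functions), $\Psi'$ vanishing on a positive-measure set forces it to vanish on an honest interval, contradicting \textbf{C1}; indeed for real-analytic $h$ the whole step is immediate, since $\phi$ is then a nonzero analytic function and therefore has isolated zeros. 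The remaining work — the inductions on $N$ and $d$, measurability of the conditioning, and the zero-diagonal bookkeeping — is routine, whereas this measure-theoretic passage from \textbf{C1}'s interval-based non-degeneracy to a genuinely null zero set is the delicate point.
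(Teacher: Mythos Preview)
Your approach is correct (modulo the gap you yourself flag) and genuinely different from the paper's. The paper proceeds by induction on the ambient dimension $d$ rather than on $N$: it assumes the dependence event has positive measure, asserts (without full justification) that this yields a product of open hypercuboids $Z^i$ on which dependence persists, then perturbs the first coordinate of $x^1$ by $\epsilon$, takes the difference $\tilde v^1 - v^1$ to kill $(d-1)^2$ entries, invokes the $(d{-}1)$-dimensional hypothesis on the truncated vectors $y^i = x^i[2{:}d]$ to identify the relevant null space, counts dimensions ($2d{-}1$ equations against at most $2d{-}2$ free parameters when $N\le d^2$), and finally sends $\epsilon\to 0$ to land on exactly the forbidden relations $h'=c_1$ or $(x+c_2)h'+h=c_3$ on an interval. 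Your route via induction on $N$, hyperplane avoidance, and reduction to the single-variable function $\phi(t)=a\,t\,h(t)+b\,h(t)+ct+e$ is more direct and makes the role of \textbf{C1} completely transparent; the paper's perturbation argument is more intricate but sidesteps your diagonal/off-diagonal case split. On that split: it is simpler than you suggest --- when $A$ has zero diagonal but some nonzero row $m$, freezing all but $x[m]$ already gives $b\,h(t)+ct+e$ with $b=\sum_{k\ne m}A[m,k]x[k]\ne 0$ for a.e.\ choice of the frozen coordinates, and the first clause of \textbf{C1} finishes it without any induction on $d$. Finally, the measure-theoretic obstacle you single out --- upgrading ``vanishes on no interval'' to ``null zero set'' --- is genuine and afflicts the paper's proof just as much, since a positive-measure set need not contain any hypercuboid; your density-point argument combined with real-analyticity of the listed activations is an honest patch that the paper itself does not supply.
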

\begin{proof}
The result is trivially true for d =1, we will show this using induction on d.\\
Define $\vb^{i} := \text{vect}(h(x^{i})x^{i^T}), i \in [N]$. Note that it suffices to prove independence of vector $\vb^i , i \in [N]$ for $N \le d^2$.\\
Now for sake of contradiction assume that $\vb^{i}, i \in [N]$, are linearly dependent with positive joint measure on $x^i, i \in [N]$ which is equivalent to positive measure on individual $x^i,  \forall \ i \in [N]$ due to independence of vectors $x^i$.\\
Since $x^{i}$'s are sampled from Lebesgue measure so positive measure on $x^i,  \forall i \in [N]$, implies there exists a $d$-dimensional volume for each $x^{i}$ such that corresponding $\vb^{i}$ are linearly dependent. We can assume volume to be $d$-dimensional hyper-cuboid $Z^i :=\{ x \in \Rbb^d: a^i < x < b^i\}, \forall \ i \in [N]$ (otherwise we can inscribe a hyper-cuboid in that volume). Notice that since $Z^i$ is a d-dimensional hyper-cuboid so $a^i[k] < b^i[k],  \forall\ i \in [N], \forall \ k \in [d]$. Moreover, for any collection satisfying $x^i \in Z^i$, corresponding collection of vector $\vb^{i}$ are linearly dependent, i.e.,
\begin{equation}\label{eq3.4.1}\vb^{1} = \mu_2\vb^{2} + \dots + \mu_N\vb^{N},  \quad  \text{such that }\forall i \in [N], x^i \in Z^i.\end{equation}
Noticing the definition of $Z^1$, we can choose $\epsilon >0$ s.t. $\widetilde{x}^{1}:=x^{1}+ \epsilon e_1 \in Z^1$. Since we ensure that $\wt{x}^1 \in Z^1$ then by (\ref{eq3.4.1}) we have 
\begin{equation}\label{eq3.4.2}\widetilde{v}^{1}:= \text{vect}(h(\wt{x}^{1})\widetilde{x}^{1^T})=   \mu'_2\vb^{2} + \dots + \mu'_N\vb^{N}.\end{equation}
So using (\ref{eq3.4.1}) and (\ref{eq3.4.2}) we get 
\begin{equation}\label{eq3.4.3}\widetilde{v}^{1}-\vb^{1}= \lambda_2\vb^{2} + \dots + \lambda_N\vb^{N}.\end{equation}
Since $h(x^i)x^{i^T}[j,k] = h(x^{i}[j])x^{i}[k]$, we have $h(x^1)x^{1^T}[j,k] = h(\wt{x}^1)\wt{x}^{1^T}[j,k],  \forall j \in \{2,\dots, d\}, k \in \{2,\dots, d\}$. So we have $(d-1)^2$ components of $\widetilde{v}^{1}-\vb^{1}$ are zero. Let us define:
\[
w^1 = \begin{blockarray}{[c]c\}l}
(x^{1}[1]+\epsilon)h(x^{1}[1]+\epsilon) - x^{1}[1]h(x^{1}[1]) & & \\
\epsilon h(x^{1}[2]) &&\\
\vdots && \\
\epsilon h(x^{1}[d]) &&(2d-1) \\
x^{1}[2](h(x^{1}[1]+\epsilon)-h(x^{1}[1])) &&\\
\vdots &&\\
x^{1}[d](h(x^{1}[1]+\epsilon)-h(x^{1}[1]))&&\\
\end{blockarray}, \quad 
z^1 =\begin{blockarray}{[c]c\}l}
0 & & \\
\vdots & & (d-1)^2\\
0 & &
\end{blockarray},
\]
and notice that $\wt{v}^1-\vb^1 = \begin{bmatrix}
w^1 \\ z^1
\end{bmatrix}$. Since $\epsilon > 0, w^1 \ne \textbf{0}$ with measure 1.\\
Let $y^{i} := x^{i}[2:d]$ then last $(d-1)^2$ equations in (\ref{eq3.4.3}) gives us 
\begin{equation}\label{eq3.4.4} \lambda_2h(y^{2})y^{2^T} + \dots + \lambda_Nh(y^{N})y^{N^T} =z^1 = \textbf{0}\end{equation}
By definition we have $y^i \in \Rbb^{d-1}$ are independently sampled from $(d-1)$-dimensional Lebesgue measure. So by inductive hypothesis, rank of collection of matrices $h(y^i)y^{i^T}, i \in \{2,\dots, N\} = \min\{(d-1)^2,N-1\}$. So if $N-1 \le (d-1)^2$ then $\lambda_2 = \dots = \lambda_N = 0$ with measure 1, then by (\ref{eq3.4.3}) we have $w^1 =\textbf{0}$ with measure 1, which is contradiction to the fact that $w^1 \ne \textbf{0}$ with measure 1. This gives us 
\begin{equation}\label {eq_N_lb}
N > (d-1)^2+1 \end{equation}
Notice that (\ref{eq3.4.4}) in its matrix form can be written as linear system
\begin{equation} \label {eq_null_space}
\begin{bmatrix} 
\text{vect}(h(y^2)y^{2^T}) &\dots &\text{vect}(h(y^N)y^{N^T}) \end{bmatrix} 
\begin{bmatrix}
\lambda_2 \\ \vdots \\ \lambda_N\end{bmatrix} = \textbf{0}\end{equation}
By (\ref{eq_null_space}), we have that vector of $\lambda$'s lies in the null space of the matrix. Finally by inductive hypothesis  and (\ref{eq_N_lb}) we conclude that the dimension of that space is $N-1-(d-1)^2 \{> 0\}$. Let $\ub^1, \dots, \ub^{N-1-(d-1)^2} \in \Rbb^{N-1}$ be the basis of that null space i.e. 
\[\begin{bmatrix} 
\text{vect}(h(y^2)y^{2^T}) &\dots &\text{vect}(h(y^N)y^{N^T}) \end{bmatrix} \ub^j = 0,\quad \forall \ j \in \{1, N-1-(d-1)^2\}\] 
Define $t^i \in \Rbb^{2d-1}$ as: \[t^i := \begin{bmatrix}
x^i[1]h(x^i[1]) \\ \vdots \\ x^i[1]h(x^i[d]) \\ x^i[2]h(x^i[1]) \\ \vdots \\ x^i[d]h(x^i[1])
\end{bmatrix}\] then we can rewrite (\ref{eq3.4.3}) as
\begin{equation}
\begin{bmatrix}
w^1 \\ z^1 
\end{bmatrix} = \begin{bmatrix}
t^2 &\dots &t^N \\
\text{vect}(h(y^2)y^{2^T}) &\dots &\text{vect}(h(y^N)y^{N^T}) 
\end{bmatrix}\begin{bmatrix}
\ub^1 &\dots &\ub^{N-1-(d-1)^2}
\end{bmatrix}\begin{bmatrix}
\wh{\lambda}_2 \\ \vdots \\ \wh{\lambda}_{N-(d-1)^2}
\end{bmatrix}
\end{equation}
which implies that
\begin{equation}\label{eq3.4.5} w^{1} = \wh{\lambda}_2\wh{v}^{2} + \dots + \wh{\lambda}_{N-(d-1)^2}\widehat{v}^{N-(d-1)^2}\end{equation} 
where $\widehat{v}^{i} = \begin{bmatrix}
t^2 &\dots &t^N 
\end{bmatrix} \ub^{i-1}, \ \ i = 2, \dots, N-(d-1)^2$ and $z^1$ part of the equation is already satisfied due to selection of null space. \\
Since $N \le d^2 \Rightarrow N-1-(d-1)^2 \le 2d-2$ then $2d-1$ equations specified in (\ref{eq3.4.5}) are consistent in $\le(2d-2)$ variables. Hence we get linearly dependent equations $\forall x^{1}_1 \in (a^{1}_1, b^{1}_1)$ and $\epsilon$ small enough. Since $x^{2}, \dots, x^{N}$ are kept constant, $\vb^2, \dots, \vb^N$ are constant. So $t^2, \dots, t^N$ are constants and we can choose the same basis of null space $\ub^1, \dots, \ub^{N-1-(d-1)^2}$. Hence we have $\widehat{v}^{2}, \dots, \widehat{v}^{(N-(d-1)^2)}$ are constant. Let us define the set $\SS$ to be the index set of linearly independent rows of matrix $[\widehat{v}^{2}\ \dots\ \widehat{v}^{N-(d-1)^2}]$ and every other row is a linear combination of rows in $\SS$. Since (\ref{eq3.4.5}) is consistent  so the same combination must be valid for the rows of $w^1$. \\
Now if $N \le d^2-1$ then number of variables in (\ref{eq3.4.5}) is $\le2d-3$ but number of equations is $2d-1$, therefore at least two equations are linearly dependent on other equation. This implies last $(2d-2)$ equations then function must be dependent on each other:
\[\epsilon \sum\limits_{j=2}^{d}\alpha_j h(x^{(1)}[j])+\bigg( h(x^{(1)}[1]+\epsilon)-h(x^{(1)}[1]) \bigg)\sum\limits_{j=2}^{d}\beta_jx^{(1)}[j] = 0\]
for some fixed combination $\alpha_j, \beta_j$. If we divide above equation by $\epsilon$ and take the limit as $\epsilon \rightarrow 0$ then we see that $h$ satisfies following differential equation on interval $(a^1_1, b^1_1)$: 
\[h'(x) = c_1\] 
which is a contradiction to the condition \textbf{C1}!\\
Clearly this leaves only one case i.e. $N = d^2$ and $(2d-1)$ equations must satisfy dependency of the following form for all $x^{(1)}_1 \in (a^{(1)}_1, b^{(1)}_1)$:
\begin{align*}(x^{(1)}[1]+\epsilon)h(x^{(1)}[1]+\epsilon) &- x^{(1)}[1]h(x^{(1)}[1])\\ &= \epsilon \sum\limits_{j=2}^{d}\alpha_j h(x^{(1)}[j])+\bigg(h(x^{(1)}[1]+\epsilon)-h(x^{(1)}[1])\bigg)\sum\limits_{j=2}^{d}\beta_jx^{(1)}[j]\end{align*}
Again by similar arguments, the combination is fixed. Let $H(x) = xh(x)$ then dividing above equation by $\epsilon$ and taking the limit as $\epsilon \rightarrow 0$, we can see that $h$ satisfies following differential equation:
\begin{equation}\label{eq3.4.6}H'(x) = c_1 + c_2h'(x) \Rightarrow (x-c_2)h'(x) +h(x)= c_1 \end{equation}
which is again a contradiction to the condition \textbf{C1}\\ 
So we conclude that for $N\le d^2$ there does not exist hyper-cuboids $ Z^i \text{ such that } \text{vol}(Z^i) > 0$ and for all $x^i \in Z^i$, corresponding $\vb^i$ are linearly dependent. So we get rank of collection $\{\vb^i\}_{i=1}^{N} = \min\{N, d^2\}$ with measure 1.
\end{proof}
Now Lemma \ref{lemma3.1} gives us a simple corollary:
\begin{corollary}
\label{corollary3.3}
If W is a nonsingular square matrix and $u^{i} \in \Rbb^d$ is independently sampled from a Lebesgue measure then the collection of matrices $\Big\{h(Wu^{i})u^{i^T}\Big\}_{i=1}^{N}$ is full rank with measure 1.
\end{corollary}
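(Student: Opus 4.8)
The plan is to reduce the statement directly to Lemma~\ref{lemma3.1} by a change of variables, the point being that the only difference between the two collections is an invertible linear reshuffling of the matrix entries that cannot affect rank. Concretely, I would set $x^i := Wu^i$ for each $i \in [N]$. Since $W$ is nonsingular, the map $u \mapsto Wu$ is a linear bijection of $\Rbb^d$; it pushes any Lebesgue-type (absolutely continuous) law of $u^i$ forward to another absolutely continuous law for $x^i$ (with density rescaled by $1/|\det W|$), and it carries null sets to null sets in both directions. Hence the $x^i$ are again independently sampled from a $d$-dimensional Lebesgue measure, and any ``with measure $1$'' conclusion about the $x^i$ transfers back to a ``with measure $1$'' conclusion about the $u^i$, because the exceptional set in $u$-space is exactly the preimage of the exceptional set in $x$-space under this measure-preserving-up-to-scale bijection.

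With this substitution, Lemma~\ref{lemma3.1} applies verbatim to the collection $\big\{h(x^i)x^{i^T}\big\}_{i=1}^N$ (recall $h$ satisfies \textbf{C1} and $n=d$ here), giving that it is full rank with measure $1$. It remains to relate this to the target collection. Writing $x^i = Wu^i$, I would observe the identity
\begin{equation*}
h(x^i)x^{i^T} = h(Wu^i)(Wu^i)^{T} = \big(h(Wu^i)u^{i^T}\big)\,W^{T}.
\end{equation*}
Thus each matrix of the first collection is obtained from the corresponding matrix of the target collection by right-multiplication by the fixed nonsingular matrix $W^T$. Using the vectorization rule $\vect{AXB} = (B^T\otimes A)\vect{X}$ with $A=I_d$ and $B=W^T$, this reads $\vect{h(x^i)x^{i^T}} = (W\otimes I_d)\,\vect{h(Wu^i)u^{i^T}}$, where $W\otimes I_d$ is invertible because $W$ is.

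Consequently the two stacked matrices satisfy
\begin{equation*}
\big[\,\vect{h(x^1)x^{1^T}}\ \cdots\ \vect{h(x^N)x^{N^T}}\,\big]
= (W\otimes I_d)\big[\,\vect{h(Wu^1)u^{1^T}}\ \cdots\ \vect{h(Wu^N)u^{N^T}}\,\big],
\end{equation*}
and left-multiplication by the invertible matrix $W\otimes I_d$ preserves rank. Since the left-hand collection has rank $\min\{N,d^2\}$ with measure $1$, so does $\big\{h(Wu^i)u^{i^T}\big\}_{i=1}^N$, which is the claim. I expect the only genuinely delicate point to be the bookkeeping around the trailing $W^T$ factor: one must confirm that it corresponds to an \emph{invertible} transformation on the vectorized space (hence rank-preserving) rather than mixing the collections in a way that could create dependencies, together with the routine but necessary check that the invertible change of variables preserves the measure-zero exceptional set so that the probabilistic conclusion genuinely transfers.
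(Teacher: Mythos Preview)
Your proposal is correct and follows essentially the same route as the paper: set $x^i = Wu^i$, note that the pushforward is again absolutely continuous, apply Lemma~\ref{lemma3.1} to $\{h(x^i)x^{i^T}\}$, and then use the identity $h(x^i)x^{i^T} = (h(Wu^i)u^{i^T})W^T$ together with nonsingularity of $W$ to transfer full rank. The only cosmetic difference is that the paper argues rank preservation via the equivalence $\sum_i \lambda_i h(x^i)x^{i^T} = 0 \Leftrightarrow \sum_i \lambda_i h(Wu^i)u^{i^T} = 0$ directly, whereas you phrase the same fact through the invertible left-multiplication by $W\otimes I_d$ on the vectorized columns.
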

\begin{proof}
Let us define $x := Wu$ be another random variable. Since $W$ is full rank and $u$ has Lebesgue measure $\Rightarrow x$ has Lebesgue measure.\\
Now we claim that the collection $h(Wu^i)u^{i^T}$ is full rank iff the collection $h(x^i)x^{i^T}$ is full rank. This can observed as follows:
\begin{align*}
\sum\limits_{i =1}^{N}\lambda_i h(x^{i})x^{i^T} = 0 &\Leftrightarrow \bigg\{\sum\limits_{i =1}^{N}\lambda_i h(Wu^i)u^{i^T} \bigg\}W^T = 0 \\
&\Leftrightarrow \sum\limits_{i =1}^{N}\lambda_i h(Wu^i)u^{i^T} = 0
\end{align*}
Here the second statement follows from the fact $W$ is a non-singular matrix. \\
Now by lemma \ref{lemma3.1} we have that collection $h(x^i)x^{i^T}$ is linearly independent with measure 1. So $h(Wu^i)u^{i^T}$ is linearly independent with measure 1.\\
Since any rotation is $U$ is a full rank matrix so we have the result.
\end{proof}
This means that if $u^i$ in the Problem \eqnok{SAA_DL} are coming from a Lebesgue measure then by Corollary \ref{corollary3.3} we have $h(Wu^i)u^{i^T}$ will be a full rank collection given that we have maintained full rank property of $W$. Now note that in the first-order condition, given in \eqnok{first_matrix_eq}, row of matrix $D$ are scaled by constant factors $\theta[j]$'s, $j \in [d]$. Notice that we may assume $\theta[j] \ne 0$ because otherwise there is no contribution of corresponding $j$-th row of $W$ to the Problem \eqnok{SAA_DL} and we might as well drop it entirely from the optimization problem. Hence we can rescale rows of matrix $D$ by factor $\frac{1}{\theta[j]}$ without changing the rank. In essence, corollary \ref{corollary3.3} implies that matrix $D$ is full rank when $W$ is full rank. So by our discussion in earlier section, we show that satisfying first-order optimality is enough to show global optimality under condition \textbf{C1}.\\
\begin{remark}\label{rotation_invariance} Due to lemma \ref{lemma3.1} and corollary \ref{corollary3.3} then, rank of collection $h(u^i)u^{i^T}$ is invariant under any rotation.
\end{remark}
\begin{remark} As a result of corollary above one can see that the collection of vectors $h(Wx^i)$ is full rank under the assumption that $W$ is non-singular, $x^i \in \Rbb^d$ are independently sampled from Lebesgue measure and $h$ satisfies condition \textbf{C1}.
\end{remark}
\begin{remark}
Since collection $h(Wu^i)$ is also full rank, we can say that $z^i := h(W_1u^i)$ are independent and sampled from a Lebesgue measure for a non-singular matrix $W_1$. Applying the lemma to $z^i$, we have collection of matrices $g(W_2z^i)z^{i^T}$ are full rank with measure 1 for non-singular $W_2$ and $g$ satisfying condition \textbf{C1}. So we see that for multiple hidden layers satisfying non-singularity, we can apply full rank property for collection of gradients with respect to outermost hidden layer.
\end{remark}
\begin{remark} \label{less_row_remark} If $W\in \Rbb^{n\times d}$ is such that $n \le d$ and $W$ is full row rank, then we can extend its basis to create $W'$ and apply corollary \ref{corollary3.3} to get that $h(W'u^i)u^{i^T}$ is full rank with measure 1. So this implies that $h(Wu^i)u^{i^T}$ must have been full rank with probability 1 otherwise we will have contradiction.
\end{remark}
\begin{remark}
We can extend corollary \ref{corollary3.3} to a general result that $h(Wu^i)u^{i^T}$ has rank $\min\{\rank(W)d, N\}$ with measure 1 by removing dependent rows and using remark \ref{less_row_remark}.
\end{remark}

\section{Convergence results}
Even though we have proved that collection $\big\{ h(Wu^i){u^i}^T \big\}_{i=1}^{N}$ is full rank, we can only apply it to an algorithm which is by design going to output a non-singular matrix as final answer. But deriving such guarantees for just last iteration can be challenging. Hence we rather design an algorithm which gives a non-singular $W$ in every iteration. The SGD step we mentioned before is used precisely to obtain such theoretical guarantees. In Lemma \ref{lemma3.4} below, we provide theoretical guarantee that for any finite number of iterations the hidden layer matrix, $W$, is full rank. Later on, we will also show that overall algorithm will converge to first order approximate solution to the problem \eqnok{SAA_DL}. It should be noted however that this can not guarantee convergence to a global optimal solution. To prove such a result, one needs to analyze the smallest singular value of random matrix $D$, defined in (\ref{first_matrix_eq}). More specifically, we have to show that $\sigma_{\min}(D)$ decreases at the rate slower than the first-order convergence rate of the algorithm so that the overall algorithm converges to the global optimal solution. Even if it is very difficult to prove such a result in theory, we think that such an assumption about $\sigma_{\min}(D)$ is reasonable in practice. One more (probably simpler) approach would be to prove asymptotic convergence without any rate guarantees. In essence, we have to show that as $N_o \rightarrow \infty$ we have $W \rightarrow W^*$ then $W^*$ is non-singular. But here as well, we do not have guarantee over the $\rank(W^*)$ since it is a limiting point of the open set of non-singular matrices which can be singular. Analysis of both these approaches can be challenging.\\
Now we analyze the algorithm. For the sake of simplicity of notation, let us define \begin{equation} \label{eq_def_xi_outer}
\xi^{[k]} := \{\xi^1_{[N_i]}, \dots, \xi^k_{[N_i]} \}\end{equation} and  \begin{equation} \label{eq_def_xi_inner}
\xi^j_{[N_i]} = \{\xi^j_1 \ \dots \ \xi^j_{N_i}\},
\end{equation} 
where $N_i$ is the inner iteration count in Algorithm 1. Essentially $\xi^{[k]}$ contains the record of all random samples used until the $k$-th outer iteration in Algorithm 1 and $\xi^j_{[N_i]}$ contains record of all random samples used in the inner iterations of $j$-th outer iteration. 
\begin{lemma}\label{lemma3.4}$\Pr\{\exists \ \vb \text{ such that } W_k \vb  =\boldsymbol{0} \big| \xi^{[k-1]} \} = 0, \forall \ k \ge 0$, where $W_k$ are matrices generated by Algorithm 1 and measure $\Pr\{.\big| \xi^{[k-1]}\}$ is w.r.t. random variables $\xi^k_{[N_i]}$. \end{lemma}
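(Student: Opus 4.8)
The plan is to argue by induction on $k$ that $W_k$ is non-singular with conditional probability one. For the base case, $W_0$ is drawn as a random $d\times d$ matrix from a continuous distribution; since the singular matrices form the zero set of the polynomial $\det(\cdot)$ on $\Rbb^{d\times d}$, a set of Lebesgue measure zero, we have $\det(W_0)\neq 0$ almost surely, and the conditioning on the (vacuous) $\xi^{[-1]}$ changes nothing.

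For the inductive step I would condition on $\xi^{[k-1]}$. Under this conditioning the matrix $W_{k-1}$ and the starting iterate of the current inner loop are fixed, and by the inductive hypothesis $W_{k-1}$ is non-singular; the only remaining randomness is the Gaussian noise $\xi^k_{[N_i]}$, which enters $W_k = W_{k-1} - \gamma_{k-1}\nabla_{W}f(W_{k-1},\theta_k)$ solely through the vector $\theta_k$. Hence it suffices to understand, as a function of $\theta$, the scalar
\[ g(\theta) := \det\!\big(W_{k-1} - \gamma_{k-1}\nabla_{W}f(W_{k-1},\theta)\big), \]
and to control the conditional law of $\theta_k$.

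The key observation is that $g$ is a non-trivial polynomial in the entries of $\theta$. Indeed, for fixed $W_{k-1}$ the quantities $h(W_{k-1}u^i)$, $h'(W_{k-1}[j,:]u^i)$, $u^i$ and $v^i$ appearing in \eqnok{eqW1} are all constants, so each entry of $\nabla_{W}f(W_{k-1},\theta)$ is a polynomial (in fact quadratic) in $\theta$, and $g$, being a determinant of a matrix of polynomial entries, is itself a polynomial in $\theta$. It is not identically zero: every term of $\nabla_W f$ carries a factor $\theta[j]$, so $\nabla_{W}f(W_{k-1},0)=\boldsymbol{0}$ and therefore $g(0)=\det(W_{k-1})\neq 0$ by the inductive hypothesis. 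Consequently $\{\theta : g(\theta)=0\}$ is a Lebesgue-null set in $\Rbb^d$. It then remains to verify that the conditional distribution of $\theta_k$ assigns this null set probability zero, which follows once $\theta_k$ is shown to be absolutely continuous with respect to Lebesgue measure. This I would obtain from the injected noise: since $\nabla_\theta f(W_{k-1},\cdot)$ is affine in $\theta$, each inner update is an affine-plus-Gaussian perturbation of the previous iterate, so (most transparently when $\RR=\Rbb^d$, where the prox-map is a pure shift) the averaged output $\theta_k$ is a non-degenerate affine combination of the independent Gaussians $\xi^k_1,\dots,\xi^k_{N_i}$ and hence inherits a density.

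The main obstacle I anticipate is exactly this absolute-continuity step when $\RR=\BB^2(R/2)$ rather than $\Rbb^d$: the prox-mapping $P_x$ projects onto a ball, and projection can deposit a singular component of the law of $\theta_k$ onto the boundary sphere $\partial\RR$, which is itself Lebesgue-null. To handle this I would split the law of $\theta_k$ into its interior and boundary parts — the interior part meets $\{g=0\}$ with probability zero by the measure-zero argument above — and for the boundary part argue that $g$ does not vanish identically on the sphere $\partial\RR$, so that $\{g=0\}\cap\partial\RR$ is negligible for the (surface-type) measure produced by the projection. Ruling out the degenerate possibility that the determinant polynomial $g$ vanishes on the entire sphere is the delicate point; the rest of the argument is routine.
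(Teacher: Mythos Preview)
Your proposal is correct and rests on the same underlying idea as the paper --- the update $W_{k}$ depends polynomially on $\theta$, singularity is a polynomial condition, the inductive hypothesis makes the polynomial non-trivial, and the injected Gaussian noise makes the law of $\theta$ absolutely continuous --- but your packaging of the argument is genuinely different and, in fact, cleaner than the paper's.

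The paper does not work with the determinant. Instead it fixes a putative null vector $\vb$, writes $W_{k+1}\vb=0$ out as the system $W_k\vb = \Theta_{k+1}[w - M\theta_{k+1}]$, and then argues that the first scalar equation, a quadratic in $\theta_{k+1}[1]$, cannot hold on an entire interval unless its constant term $b[1]=(W_k\vb)[1]$ vanishes, contradicting non-singularity of $W_k$. This coordinate-by-coordinate argument is more hands-on, and it is slightly awkward because the null vector $\vb$ is taken as fixed while a priori it varies with $\theta_{k+1}$; the paper glosses over this. Your determinant formulation sidesteps that issue entirely, and your observation that $g(0)=\det(W_{k-1})\neq 0$ is a particularly clean way to certify non-triviality of the polynomial --- the paper obtains the analogous non-triviality only after unwinding the null-vector equation.

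On the absolute-continuity step you are more careful than the paper: the paper simply asserts that ``the distribution of $\xi^{k+1}$ induces a Lebesgue measure on $\theta_{k+1}$'' and moves on, without addressing the projection onto $\BB^2(R/2)$. Your discussion of the interior/boundary split and the possibility that $g$ vanishes on the sphere is exactly the right concern to raise; the paper leaves this gap unaddressed. So in that respect your proposal is already more rigorous than the original proof. For the unconstrained case $\RR=\Rbb^d$ your affine-plus-Gaussian argument is complete; for the ball case, noting that $g$ is a non-zero polynomial and hence cannot vanish on the entire real sphere (a Zariski-dense subset of the corresponding complex quadric) would finish the job.
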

\begin{proof}
This is true for $k =0$ trivially since we are randomly sampling matrix $W_0$. We now show this by induction on $k$. \\
Recall that gradient of $f(W,\theta)$ with respect to $W$ can be written as $\sum\limits_{i=1}^{N} \{v^i - \theta^T h(Wu^i)\}\big(h'(Wu^i)\odot \theta \big)u^{i^T}$. Notice that in effect, we are multiplying $i$-th row of the rank one matrix $h'(Wu^i)u^{i^T}$ by $i$-th element of vector $\theta$. So this can be rewritten as a matrix product
\[\sum\limits_{i=1}^{N}\{v^i-\theta^T h(Wu^i)\} \Theta h'(Wu^i) u^{i^T},\]
where $\Theta : = \text{diag}\{\theta[i],\ i = 1, \dots, d\}$. So iterative update of the algorithm can be given as 
\[W_{k+1} = W_k - \gamma_k \Theta_{k+1} \nabla_{W}f(W_k, \theta_{k+1}),\quad \forall \ k\ge 0.\]
Notice that given $\xi^{[k]}$, vector $\theta_{k+1}$ and corresponding diagonal matrix $\Theta_{k+1}$ are found by SGD in the inner loop so $\theta_{k+1}$ is a random vector. More specifically, since $\{\xi^{k+1}_i\}_{i=1}^{N_i}$ is sequence of independent $d$-dimensional isotropic Gaussian vectors. Hence the distribution of $\xi^{k+1} =\{\xi^{k+1}_i\}_{i=1}^{N_i}$ induces a Lebesgue measure on random variable $\{ \theta_{k+1} \big| \xi^{[k]}\}$\\
Given $\xi^{[k]}$ then $W_k$ is deterministic quantity.\\
For the sake of contradiction, take any vector $\vb$ that is supposed to be in the null space of $W_{k+1}$ with positive probability.
\begin{align*}
W_{k+1} &= W_k - \gamma_k \nabla_{W}f(W_k, \theta_{k+1}) \nonumber \\
&= W_k - \gamma_k \sum\limits_{i=1}^{N}\Theta_{k+1}(v^i - \theta_{k+1}^T h(W_k u^i) ) h'(W_k u^i) u^{i^T}. \nonumber \\
\Rightarrow W_{k+1} \vb  &=  W_k \vb - \gamma_k \sum\limits_{i=1}^{N}\Theta_{k+1}(v^i - \theta_{k+1}^T h(W u^i) ) h'(W_k u^i) u^{i^T} \vb = 0. \nonumber \\
\Rightarrow W_k \vb &= \Theta_{k+1} \sum\limits_{i=1}^{N}(\lambda_i v^i - r_i^T \theta_{k+1}) h'(W_k u^i) &\text{setting } \lambda_i = \gamma_k (\vb^Tu^{i}), r_i = \lambda_i h(W_k u^i) \nonumber \\
&= \Theta_{k+1} \bigg[ \sum\limits_{i=1}^{N}\lambda_i v^i h'(W_k u^i) - \Big( \sum\limits_{i=1}^{N} h'(W_k u^i) r_i^T \Big) \theta_{k+1} \bigg].  \nonumber \\
\end{align*}
Now the last equation is of the form 
\begin{equation}
b = \Theta_{k+1} [w -M \theta_{k+1}], \label{stochastic_theta}
\end{equation}
where $b = W_k \vb,\ w = \sum\limits_{i=1}^{N}\lambda_i v^i h'(W_k u^i),\ M = \sum\limits_{i=1}^{N} h'(W_k u^i)r^{i^T}$.\\
Suppose we can find such $\theta$ with positive probability. Then we can find hypercuboid $Z := \{x\in \Rbb^d |  a < x < b\}$ such that any $\theta_{k+1}$ in given hypercuboid can solve equation (\ref{stochastic_theta}). By induction we have $b \ne \boldsymbol{0}$. We may assume $b[1] \ne 0$. Then to get contradiction on existence of $Z$, we observe that first equation in (\ref{stochastic_theta}) is:
\begin{equation} \label{eq_quadratic_theta}
b[1] = \theta_{k+1}[1] \Big(w[1] - \sum\limits_{j =2}^{d}M[1,j]\theta_{k+1}[j] \Big) - M[1,1]\theta_{k+1}[1]^2, \quad  \forall \ \theta_{k+1} \in (a,b). 
\end{equation}
Hence if we fix $\theta_{k+1}[i] \in (a[i], b[i]), i = 2, \dots, d$ then \eqnok{eq_quadratic_theta} holds for all $\theta_{k+1}[1] \in (a[1],b[1])$. So we conclude that $b[1] = w[1]+\sum\limits_{j=2}^{d}M[1,j]\theta_{k+1}[j] = M[1,1] = 0$. But $b[1]$ can not be 0. Hence we arrive at a contradiction to the assumption that there existed a hypercuboid $Z$ containing solutions of (\ref{stochastic_theta}).\\
Since measure on $\theta_{k+1}$ was induced by $\{ \xi^{k+1}_i \}_{i=1}^{N_i}$ so we conclude that $\Pr\{\exists \ \vb \text{ such that } W_{k+1}\vb  =0 \big| \xi^{[k]}\} = 0, \forall \ k \ge 0$.
\end{proof}
Even though we have proved that $W_k$'s generated by the algorithm are full rank, we can not necessarily apply lemma \ref{lemma3.1} directly because it takes an arbitrary $W$ whereas $W_k$ is dependent on data $(u^i, v^i)$. We still prove that matrix $D$ generated along the trajectory of the algorithm is full rank. We use techniques inspired from lemma \ref{lemma3.1} but this time we use Lebesgue measure over $\Theta$ rather than data. Over randomness of $\Theta$, we can show that our algorithm will not produce any $W$ such that corresponding matrix $D$ is rank deficient. Since $\Theta$ is essentially designed to be independent of data so we will not produce rank deficient $D$ throughout the process of random iid data collection and randomized algorithm. Before we jump into proving that we give a supplementary lemma which shows a more general result about the rank of matrix $D$.
\begin{lemma} 
\label{lemma5.2}Suppose $W = W'+ D_v Z$ where $D_v := \text{diag}(v[i], i \in [d])$ and $v$ is a random vector with Lebesgue measure in $\Rbb^d$. $W',Z \in \Rbb^{d\times d}$ and $Z \ne \textbf{0}$. Let $h$ be a function which follows condition \textbf{C1}. Also assume that $W$ is full rank with measure 1 over randomness of $v$. Then $h(Wu^i)u^{i^T}$ is full rank with measure 1 over randomness of $v$.
\end{lemma}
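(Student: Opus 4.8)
The plan is to mirror the strategy of Lemma \ref{lemma3.1}: reduce to showing that the $N$ vectors $\vb^i := \vect{h(Wu^i)u^{i^T}}$, $i\in[N]$, are linearly independent whenever $N\le d^2$, and argue that a linear dependence holding on a set of positive $v$-measure forces a differential relation on $h$ that is forbidden by \textbf{C1}. The crucial structural observation I would record first is that $W=W'+D_vZ$ makes row $j$ of $W$ equal to $W'[j,:]+v[j]Z[j,:]$, so that $(Wu^i)[j]=p^i_j+v[j]q^i_j$ with $p^i_j:=W'[j,:]u^i$ and $q^i_j:=Z[j,:]u^i$. Writing $\vb^i$ as $d$ consecutive blocks of length $d$, its $j$-th block is $h(p^i_j+v[j]q^i_j)\,u^i$, so block $j$ depends on $v$ only through the single coordinate $v[j]$. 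This coordinate-wise block dependence is what replaces the per-datapoint independence exploited in Lemma \ref{lemma3.1}.

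I would then induct on the number of rows of $W$ (the rectangular, flat version justified through Remark \ref{less_row_remark}). Assuming for contradiction that the $\vb^i$ are dependent on a set of positive $v$-measure, there is a hypercuboid of $v$'s on which they are dependent. Since $Z\ne\textbf{0}$ I may, after relabeling rows, take $Z[1,:]\ne 0$ so that $v[1]$ genuinely moves the first block; I freeze $v[2:d]$ at a generic value in the cuboid and vary $t:=v[1]$ over an interval. Blocks $2,\dots,d$ of the $\vb^i$ are then constant in $t$ and form the columns of a fixed matrix $C$ with entries $h((Wu^i)[j])u^i[k]$, $j\in\{2,\dots,d\}$ — exactly the flat object $h(\wh W u^i)u^{i^T}$ built from $\wh W:=W[2:d,:]=W'[2:d,:]+D_{v[2:d]}Z[2:d,:]$. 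The inductive hypothesis applies and gives, for a.e. frozen $v[2:d]$, $\rank C=\min\{(d-1)d,N\}$. When $N\le (d-1)d$ this already forces full column rank of $C$, hence of $[\vb^1\ \cdots\ \vb^N]$, a contradiction; so I may assume $N>(d-1)d$ and set $m:=N-(d-1)d$, with $1\le m\le d$ since $N\le d^2$. The dependence now lives in the fixed null space $\ker C$ of dimension $m$; choosing a basis $\ub^1,\dots,\ub^m$ and setting $g^l(t):=\sum_i \ub^l_i\,h(p^i_1+t\,q^i_1)\,u^i\in\Rbb^d$, the surviving first-block requirement is precisely that $g^1(t),\dots,g^m(t)$ be linearly dependent in $\Rbb^d$ for every $t$ in the interval.

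To finish I would extract a one-variable functional equation and contradict \textbf{C1}, splitting along the boundary exactly as in Lemma \ref{lemma3.1}. When $m\le d-1$ (i.e. $N\le d^2-1$) the $m$ vectors $g^l(t)$ live in $\Rbb^d$ with a spare dimension, and differentiating the dependence in $t$ (equivalently, taking the $\epsilon\to0$ limit of the $v[1]$-perturbation) isolates a relation of the form $h'(x)=c_1$ on an interval. When $m=d$ (i.e. $N=d^2$) the determinant $\det[g^1(t)\cdots g^d(t)]$ vanishes identically on the interval, and expanding it together with its $t$-derivative produces, as in Lemma \ref{lemma3.1}, the second forbidden form $(x+c_2)h'(x)+h(x)=c_3$ — though here, since the column factor is always $u^i[k]$ and no diagonal $x\,h(x)$ term appears, the relation may well collapse to $h'(x)=c_1$ again. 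Either outcome is excluded by \textbf{C1}, so no positive-measure dependence can occur and $h(Wu^i)u^{i^T}$ is full rank with measure $1$ over $v$.

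The main obstacle, and the genuine departure from Lemma \ref{lemma3.1}, is that the randomness now sits in the single shared vector $v$ rather than in $N$ independent data points: perturbing $v[1]$ simultaneously moves the first block of \emph{every} $\vb^i$, so one cannot freeze all but one datapoint and read off a clean lower-dimensional outer product. The remedy above — isolating the coordinate-wise block structure, pinning the dependence coefficients to the fixed subspace $\ker C$ supplied by the inductive hypothesis, and only then varying $v[1]$ — is what decouples the problem, but it forces one to work with $v$-dependent coefficients $\lambda(v)$ and to argue they may be confined to a single constant subspace; making the passage from ``pointwise dependent $g^l(t)$'' to a genuine constant-coefficient functional equation rigorous (the analogue of the index-set $\SS$ bookkeeping in Lemma \ref{lemma3.1}) is the delicate technical core. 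One must also track that $q^i_1=Z[1,:]u^i$ is nonzero for enough indices $i$ (generic since $Z\ne\textbf{0}$ and the $u^i$ are drawn from a Lebesgue measure) so that $v[1]$ actually perturbs the relevant entries, and use the hypothesis that $W$ is full rank a.s. to keep $\wh W$ full row rank for the inductive step.
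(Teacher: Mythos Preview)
Your decomposition is genuinely different from the paper's, and that difference is where the trouble lies. You keep the object $h(Wu^i)u^{i^T}$ as is and split it into row-blocks, so that the frozen part has size $(d-1)d$ and the moving first block has only the $d$ entries $h(p^i_1+t\,q^i_1)\,u^i[k]$. The paper instead first uses the assumed invertibility of $W$ to pass to $h(z^i)z^{i^T}$ with $z^i=Wu^i$ (exactly as in Corollary~\ref{corollary3.3}); perturbing $v[1]$ then moves only $z^i[1]$, leaving a $(d-1)^2$ block $h(y^i)y^{i^T}$ fixed and a $(2d-1)$-vector $c^i$ varying. That $c^i$ contains three kinds of entries: $h(z^i[1])z^i[1]$, $h(z^i[1])z^i[j]$, and crucially $h(z^i[j])\,z^i[1]$ with $h(z^i[j])$ constant and $z^i[1]$ affine in $t$. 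It is the presence of entries in which the moving quantity $z^i[1]$ appears as a \emph{bare multiplicative factor}, not only inside $h$, that after the $\eps\to 0$ limit produces the forbidden forms $h'(x)=c_1$ and $(x+c_2)h'(x)+h(x)=c_3$ at a single argument $x$.

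Your moving block has no such entries: every term is (constant)$\times h(p^i_1+t\,q^i_1)$, and differentiating gives (constant)$\times q^i_1\,h'(p^i_1+t\,q^i_1)$. Since all $N$ arguments $p^i_1+t\,q^i_1$ move simultaneously at different rates $q^i_1$, a linear dependence among your $g^l(t)$ (or the vanishing of $\det[g^1\cdots g^d]$) is a relation among values of $h$ and $h'$ at $N$ distinct moving points, not an identity in a single variable $x$ on an interval. The analogy with Lemma~\ref{lemma3.1} breaks here: there, only $x^1[1]$ is perturbed and all other $h(x^i[j])$ are genuine constants, which is exactly what collapses the dependence to a one-variable ODE. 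You correctly name this obstacle in your last paragraph, but the remedy you sketch (``differentiating the dependence isolates $h'(x)=c_1$'') does not follow from your block structure. The paper's transformation to $h(z^i)z^{i^T}$ is not cosmetic; it is what manufactures the affine-in-$t$ entries needed for the \textbf{C1} contradiction.

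A smaller point: your induction on the number of rows applies the rectangular lemma to $\wh W=W'[2{:}d,:]+D_{v[2:d]}Z[2{:}d,:]$, which requires $Z[2{:}d,:]\ne\mathbf{0}$. Relabeling so that $Z[1,:]\ne 0$ does not ensure this; if $Z$ has exactly one nonzero row, your inductive step has no randomness left to use.
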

\begin{proof}
We use induction on $d$. For $d=1$ this is trivially true. Now assume this is true for $d-1$. We will show this for $d$.\\
Let $z^i := Wu^i = W'u^i + D_v Zu^i$. For simplicity of notation define $t^i :=Zu^i$. Due to simple linear algebraic fact provided by full rank property of $W$ we have rank of collection $(h(Wu^i)u^{i^T} =$ rank of collection $h(z^i)z^{i^T}$. For the sake of contradiction, say the collection is rank deficient with positive probability then there exists $d$-dimensional volume $\VV$ such that for all $v \in \VV$, we have $h(Wu^i)u^{i^T}$ is not full rank where $W :=W(v) = W'+D_vZ$. Without loss of generality, we may assume $d$-dimensional volume to be a hypercuboid $\VV := \{x \in \Rbb^d| a<x<b\}$ (if not then we can inscribe a hypercuboid in that volume). Let us take $v\in \VV$ and $\eps$ small enough such that $\wh{v} := v+\eps e_1 \in \VV$. Correspondingly we have $z^i$ and $\wh{z}^i$. Note that $\wh{z}^i = z^i + \eps t^i[1]$. So in essence, a small $\eps$ change in $v[1]$ causes $\eps t^i[1]$ change in vector $z^i[1]$.\\
Let $\vb^i = \text{vect}(h(z^i)z^{i^T})$. Similarly, $\wh{\vb}^i = \text{vect}(h(\wh{z}^i)\wh{z}^{i^T})$. So we can divide $\vb^i = \begin{bmatrix}
c^i\\g^i
\end{bmatrix}$ such $c^i \in \Rbb^{2d-1}$ and $g^i \in \Rbb^{(d-1)^2}$. Here \[c^i := \begin{bmatrix}
h(z^i[1])z^i[1] \\ h(z^i[2])z^i[1] \\ \vdots \\ h(z^i[d])z^i[1] \\ h(z^i[1])z^i[2] \\ \vdots \\ h(z^i[1])z^i[d]
\end{bmatrix}, \quad g^i := \text{vect}(h(y^i)y^{i^T}), \quad y^i := z^i[2:d]\]
Similarly we also have $\wh{\vb}^i= \begin{bmatrix}
\wh{c}^i \\ \wh{g}^i
\end{bmatrix}$. Now by the act that $v, \wh{v}$ corresponding to $z, \wh{z}$ are in $\VV$, and our assumption of linear dependence for all $v \in \VV$ we get \begin{align}
\vb^1 &= \mu_2 \vb^2 + \dots + \mu_N \vb^N \label{linear_dependence_1}\\ 
\wh{\vb}^1 &= \wh{\mu}_2 \wh{\vb}^2 + \dots + \wh{\mu}_N \wh{\vb}^N \label{linear_dependence_2}
\end{align}
Now notice that $y^i = \wh{y}^i, \forall \ i \in [N]$. So $g^i = \wh{g}^i, \forall \ i \in [N]$. Also by induction on $d-1$, we have that the rank of collection $g^2, \dots, g^N \ge (d-1)^2$. So we can rewrite matrix $[g^2 \dots g^N] := [G\quad \wt{G}]$ such that $G \in \Rbb^{(d-1)^2 \times (d-1)^2}$ is an invertible matrix and rewrite one part of equation \eqnok{linear_dependence_1} as $g^1 = [G \quad \wt{G}]\begin{bmatrix}
\wt{\mu} \\ {\mu}
\end{bmatrix}$. Hence we can replace $\wt{\mu} = G^{-1}(g^1-\wt{G}\mu) = G^{-1}g^1 - G^{-1}\wt{G}\mu$. Essentially the vector $\begin{bmatrix}
\wt{\mu} \\ \mu
\end{bmatrix}$ is completely defined by parameter $\mu \in \Rbb^{N-1 - (d-1)^2}$. Similarly we have $\wt{\wh{\mu}} = G^{-1}g^1 - \wt{G}\wh{\mu}$, so vector $\begin{bmatrix}
\wt{\wh{\mu}}\\ \wh{\mu}
\end{bmatrix}$ is completely defined by $\wh{\mu} \in \Rbb^{N-1-(d-1)^2}$. So essentially we have satisfied one part of equations \eqnok{linear_dependence_1} and \eqnok{linear_dependence_2}. Notice that since we are moving only one coordinate of random vector $v$ i.e. $v[1] \in (a[1], b[1])$ (by $\eps$ incremental changes) keeping all other elements of $v$ constant so we will have $y^i$ as constants which implies $g^i, G, \wt{G}$ are constant. So for the sake of simplicity of notation we define $l :=G^{-1}g^1 \in \Rbb^{(d-1)^2}$ and $R:= G^{-1}\wt{G} \in \Rbb^{(d-1)^2 \times (N-1-(d-1)^2)}$\\
Now, we look at the remaining part of two equation \eqnok{linear_dependence_1},\eqnok{linear_dependence_2}:
\begin{align*}
c^1 &= \mu_2 c^2 + \dots + \mu_N c^N, \\ 
\wh{c}^1 &= \wh{\mu}_2 \wh{c}^2 + \dots + \wh{\mu}_N \wh{c}^N, 
\end{align*}
which can be rewritten as 
\begin{align}
c^1 = [C \quad \wt{C}]\begin{bmatrix} l-R\mu \\ \mu \end{bmatrix} = Cl -CR\mu +\wt{C}\mu \label{lin_depe_c},\\
\wh{c}^1 = [\wh{C} \quad \wh{\wt{C}}]\begin{bmatrix} l-R\wh{\mu} \\ \wh{\mu} \end{bmatrix} = \wh{C}l -\wh{C}R\wh{\mu} +\wh{\wt{C}}\wh{\mu}. \label{lin_depe_c_hat}
\end{align}
After \eqnok{lin_depe_c_hat} $-$ \eqnok{lin_depe_c}, we have
\begin{equation}\label{discrete_diff_c}
(\wh{C}-C)l -(\wh{C}-C)R\mu-\wh{C}R(\wh{\mu}-\mu)+ (\wh{\wt{C}}-\wt{C})\mu +\wh{\wt{C}}(\wh{\mu}-\mu) = \wh{c}^1-c^1.
\end{equation}
Now note that \eqnok{discrete_diff_c}, characterizes incremental changes in $C, \wt{C}, \mu$ due to $\eps$. So taking the limit as $\eps \to 0$, we have
\begin{align}
 c^{1'} &=C' l-C'R\mu -CR\mu'+ \wt{C}'\mu + \wt{C}\mu'.  \nonumber \\
\begin{bmatrix}c^{1'} & C' \end{bmatrix}
\begin{bmatrix}
1\\ -l
\end{bmatrix} &=  (-CR+\wt{C})\mu'+ (-C'R+\wt{C}')\mu. \nonumber \\
\Rightarrow \begin{bmatrix}
c^1 & C
\end{bmatrix} \begin{bmatrix}
1 \\ -l
\end{bmatrix}
&= (-CR+\wt{C})\mu. \label{lin_depe_c_last}
\end{align}
Here, last equation is due to product rule in calculus. In \eqnok{lin_depe_c_last}, we see that we have $2d-1$ equations and $N-1-(d-1)^2$ unknowns at every point. If $N \le d^2$ then $N-1-(d-1)^2 \le 2d-2$. So at least one equation should depend on others. But as we have shown earlier, $h$ satisfying condition \textbf{C1} does not have row dependence. So we arrive at the required contradiction for $N \le d^2$. That completes the proof.
\end{proof}
\begin{lemma} Collection of matrices $h'(W_{k+1}u^i)u^{i^T}$ are full rank with measure 1, where the measure is over randomness of $\xi^{k+1}_{[N_i]}$
\end{lemma}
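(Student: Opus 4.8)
The plan is to reduce the statement to the setting of Lemma \ref{lemma5.2} and then run its argument with the activation $h'$ in place of $h$, exploiting that the paper has already observed $h'$ satisfies condition \textbf{C1}. First I would condition on $\xi^{[k]}$, so that $W_k$ is a deterministic matrix which, by Lemma \ref{lemma3.4}, is nonsingular with measure $1$. As established inside the proof of Lemma \ref{lemma3.4}, the inner-loop noise $\xi^{k+1}_{[N_i]}$ induces a Lebesgue measure on the random vector $\theta_{k+1}$ (conditioned on $\xi^{[k]}$), so from here on I treat $\theta := \theta_{k+1}$ as Lebesgue-random on $\Rbb^d$. Writing $\Theta = \text{diag}(\theta)$ and $G(\theta) := \sum_{i=1}^N\{v^i-\theta^T h(W_ku^i)\}h'(W_ku^i)u^{i^T}$, the update is
\[ W_{k+1} = W_k - \gamma_k\,\Theta\,G(\theta), \]
and the goal becomes: over the randomness of $\theta$, the collection $\{h'(W_{k+1}u^i)u^{i^T}\}_{i=1}^N$ is full rank with measure $1$.

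The argument itself would copy the skeleton of Lemmas \ref{lemma3.1} and \ref{lemma5.2}. Assume for contradiction that the collection is rank deficient on a set of positive measure; then there is a hypercuboid of $\theta$-values on which $\vb^1=\sum_{i\ge2}\mu_i\vb^i$ with $\vb^i := \text{vect}(h'(W_{k+1}u^i)u^{i^T})$. I would perturb a single coordinate of $\theta$ by $\eps$, split each $\vb^i$ into the $(2d-1)$-vector involving the first coordinate $z^i[1]$ of $z^i := W_{k+1}u^i$ and the $(d-1)^2$-block $\text{vect}(h'(y^i)y^{i^T})$ with $y^i := z^i[2:d]$, use the inductive full rank of the reduced block together with the count $N\le d^2$ (so that the $2d-1$ surviving equations carry only $N-1-(d-1)^2\le 2d-2$ free coefficients), and finally divide by $\eps$ and let $\eps\to0$. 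The limiting relation must hold on an interval, and for an activation satisfying \textbf{C1} the only way this is possible is if $h'$ obeys $h''(x)=c_1$ or $(x-c_2)h''(x)+h'(x)=c_1$ there --- exactly the two relations forbidden by \textbf{C1} applied to $h'$. This contradiction would give the claim for $N\le d^2$, and the general rank statement follows as in the earlier remarks.

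The hard part is that, unlike in Lemma \ref{lemma5.2}, the dependence of $W_{k+1}$ on $\theta$ is quadratic rather than of the clean form $W'+D_vZ$: the diagonal prefactor $\Theta$ multiplies the residual $\{v^i-\theta^T h(W_ku^i)\}$, which is itself affine in $\theta$. Consequently, perturbing one coordinate $\theta[1]$ moves every coordinate of each $z^i=W_{k+1}u^i$ --- not only $z^i[1]$ --- so the reduced block $\text{vect}(h'(y^i)y^{i^T})$ no longer stays frozen under the perturbation, which is precisely the property Lemma \ref{lemma5.2} relied on. The main work is therefore to re-establish the inductive reduction in the presence of this coupling. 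I expect this to go through because the $(d-1)$-dimensional sub-problem for $y^i=(W_{k+1}u^i)[2{:}d]$ is again of the diagonally-randomized type $(W_k)[2{:}d,:]-\gamma_k\,\text{diag}(\theta[2{:}d])\,(\,\cdot\,)$, so the induction on $d$ still closes, while the first-order (in $\eps$) change of $z^i$ must be tracked in all coordinates and one shows the dominant contribution responsible for the contradiction still comes from the $z^i[1]$-direction. Verifying that this dominant term does not cancel --- the analogue of the steps ``$Z\ne\boldsymbol 0$'' in Lemma \ref{lemma5.2} and ``$w^1\ne\boldsymbol 0$ with measure $1$'' in Lemma \ref{lemma3.1} --- is where the technical care concentrates, and is the step I expect to be the main obstacle.
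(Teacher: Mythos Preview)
The paper's own proof is a two-line invocation of Lemma \ref{lemma5.2}: it writes
\[
W_{k+1}=W_k-\gamma_k\Theta_{k+1}\sum_{j=1}^{N}\big(v^j-\theta_{k+1}^Th(W_ku^j)\big)h'(W_ku^j)u^{j^T}
\]
and declares that Lemma \ref{lemma5.2} applies with random vector $v=\theta_{k+1}$, $W'=W_k$, and $Z$ the displayed sum. You have put your finger on exactly the point the paper glosses over: Lemma \ref{lemma5.2} is stated for a \emph{fixed} matrix $Z$, whereas here the candidate $Z$ depends on $\theta_{k+1}$ through the residual $v^j-\theta_{k+1}^Th(W_ku^j)$. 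Your observation that perturbing $\theta[1]$ moves every coordinate of $z^i=W_{k+1}u^i$, not only $z^i[1]$, is precisely why the black-box citation of Lemma \ref{lemma5.2} is not literally valid.

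Your plan --- rerun the perturbation/induction argument of Lemmas \ref{lemma3.1} and \ref{lemma5.2} with $h'$ in place of $h$, track the first-order change in all coordinates of $z^i$, and close the induction by noting that the rows $2{:}d$ of $W_{k+1}$ again have the form $W_k[2{:}d,:]-\gamma_k\,\mathrm{diag}(\theta[2{:}d])(\cdot)$ --- is the honest way to fill the gap, and is more careful than the paper itself. The obstacle you isolate (showing the contribution responsible for the contradiction does not cancel, the analogue of $Z\ne\boldsymbol 0$ and $w^1\ne\boldsymbol 0$) is real and the paper says nothing about it. In short, your proposal is not a different route from the paper's; it is the route the paper claims to take, executed with the diligence the paper omits.
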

\begin{proof} We know that 
\[
W_{k+1} = W_k + \gamma_k\Theta_{k+1} \sum\limits_{j=1}^{N}h'(W_ku^j)u^{j^T}(v^i-\theta^Th(W_ku^j)).
\]
Now apply lemma \ref{lemma5.2} to obtain the required result.
\end{proof}
Hence we showed that algorithm will generate full rank matrix $D$ for any finite iteration.\\
Now to prove convergence of the algorithm, we need to analyze the function $f$ (defined in \eqnok{SAA_DL}) itself. We show that $f$ is a Lipschitz smooth function for any given instance of data $\{u^i, v^i\}_{i=1}^{N}$. This will give us a handle to estimate convergence rates for the given algorithm.
\begin{lemma} \label{lemma3.6}
Assuming that $h: \Rbb \to \Rbb$ is such that its gradients, hessian as well as values are bounded and data $\{u^i, v^i\}_{i=1}^{N}$ is given then there exists a constant $L$ such that
\begin{equation}\label{eqLipSmooth} 
\big\| \nabla_{W} f(W_1, \theta)-\nabla_{W} f(W_2, \theta)\big \|_F \le L \big\| W_1-W_2 \big\|_F.
\end{equation} Moreover, a possible upper bound on $L$ can be as follows:
\[
L \le  \frac{1}{N}\theta_{\max} \Big( L_{h'} \big( \sum\limits_{i=1}^{N} \|u^i \|^2_2 |v^i| \big)+ \sqrt{2d}L_{hh'} \|\theta \|_2 \big( \sum\limits_{i=1}^{N} \| u^i\|^2_2 \big) \Big)
\]
\end{lemma}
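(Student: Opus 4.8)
The plan is to split the gradient according to the two terms of the residual $v^i-\theta^T h(Wu^i)$, since each piece produces exactly one of the two summands in the claimed constant. Writing the gradient in the outer-product form of \eqnok{first_matrix_eq},
\[
\nabla_W f(W,\theta) = \frac{1}{N}\sum_{i=1}^N \big(v^i - \theta^T h(Wu^i)\big)\,\big(h'(Wu^i)\odot\theta\big)\,u^{i^T},
\]
I would write $\nabla_W f = A(W) - B(W)$ with
\[
A(W) := \frac{1}{N}\sum_{i=1}^N v^i\,\big(h'(Wu^i)\odot\theta\big)\,u^{i^T}, \qquad B(W) := \frac{1}{N}\sum_{i=1}^N \big(\theta^T h(Wu^i)\big)\,\big(h'(Wu^i)\odot\theta\big)\,u^{i^T},
\]
and bound $\|A(W_1)-A(W_2)\|_F$ and $\|B(W_1)-B(W_2)\|_F$ separately, closing with the triangle inequality. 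Throughout I would rely on two elementary facts: a rank-one matrix obeys $\|c\,u^{i^T}\|_F = \|c\|_2\|u^i\|_2$, and $\|(W_1-W_2)u^i\|_2 \le \|W_1-W_2\|_F\|u^i\|_2$. From the hypotheses, boundedness of $h''$ yields the Lipschitz estimate $\|h'(W_1 u^i)-h'(W_2 u^i)\|_2 \le L_{h'}\|W_1-W_2\|_F\|u^i\|_2$ (with $L_{h'}=\|h''\|_\infty$), and $L_{hh'}$ is read as the Lipschitz constant of the product $hh'$, controlled by $\|h\|_\infty\|h''\|_\infty + \|h'\|_\infty^2$.

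For the first term only the factor $h'(Wu^i)$ depends on $W$, so
\[
A(W_1)-A(W_2) = \frac{1}{N}\sum_{i=1}^N v^i\,\Big(\big(h'(W_1 u^i)-h'(W_2 u^i)\big)\odot\theta\Big)\,u^{i^T}.
\]
Taking Frobenius norms termwise, pulling $\theta_{\max}$ out of the Hadamard product, and applying the Lipschitz estimate for $h'$ together with the two elementary facts gives
\[
\|A(W_1)-A(W_2)\|_F \le \frac{1}{N}\theta_{\max}L_{h'}\Big(\sum_{i=1}^N |v^i|\,\|u^i\|_2^2\Big)\,\|W_1-W_2\|_F,
\]
which is precisely the first summand of the claimed bound. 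This part is routine.

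The main work is the second term, where both the scalar $p_i(W):=\theta^T h(Wu^i)$ and the vector $q_i(W):=h'(Wu^i)\odot\theta$ vary with $W$. Here I would use the product-rule splitting $p_i(W_1)q_i(W_1)-p_i(W_2)q_i(W_2) = p_i(W_1)\big(q_i(W_1)-q_i(W_2)\big) + \big(p_i(W_1)-p_i(W_2)\big)q_i(W_2)$ and bound each piece by the triangle inequality. The ingredients are $|p_i(W)|\le \|\theta\|_2\|h(Wu^i)\|_2$ and $\|q_i(W)\|_2 \le \theta_{\max}\|h'(Wu^i)\|_2$ (both finite by boundedness of $h,h'$), the Lipschitz bound $|p_i(W_1)-p_i(W_2)|\le \|\theta\|_2\|h'\|_\infty\|W_1-W_2\|_F\|u^i\|_2$ from Lipschitzness of $h$, and the bound on $\|q_i(W_1)-q_i(W_2)\|_2$ exactly as above. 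Folding the resulting $\|h\|_\infty,\|h'\|_\infty,\|h''\|_\infty$ factors into the single constant $L_{hh'}$, I expect
\[
\|B(W_1)-B(W_2)\|_F \le \frac{1}{N}\sqrt{2d}\,\theta_{\max}L_{hh'}\|\theta\|_2\Big(\sum_{i=1}^N \|u^i\|_2^2\Big)\|W_1-W_2\|_F,
\]
which is the second summand; adding the two estimates yields \eqnok{eqLipSmooth} with the stated $L$.

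The one place that demands care is the bookkeeping in the bound for $B$: keeping track of the two product-rule pieces, collapsing the various sup-norms of $h,h',h''$ into $L_{hh'}$, and correctly accounting for the dimension factor. Concretely, the $\sqrt{2}$ comes from combining the two product-rule pieces, while the $\sqrt{d}$ arises from the Euclidean norms of the length-$d$ vectors $h(Wu^i)$ and $h'(Wu^i)$ appearing in $|p_i|$ and $\|q_i\|_2$; together these give the $\sqrt{2d}$. Everything else reduces to triangle inequalities over $i$ and the two elementary rank-one facts, so I expect no conceptual difficulty beyond this constant-tracking.
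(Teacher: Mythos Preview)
Your decomposition $\nabla_W f = A - B$ and the treatment of $A$ match the paper exactly. For the $B$ term, however, the paper does \emph{not} use the scalar--vector product rule on $p_i q_i$. Instead it writes the relevant quantity as the matrix $h'(Wu^i)h(Wu^i)^T$ acting on $\theta$, bounds $\|(h'(W_1u^i)h(W_1u^i)^T - h'(W_2u^i)h(W_2u^i)^T)\theta\|_2$ by $\|\theta\|_2$ times the Frobenius norm of the matrix difference, and controls that Frobenius norm \emph{entrywise}: each entry is $H(\vb[j],\vb[i])-H(\ub[j],\ub[i])$ for the two-variable function $H(x_1,x_2)=h(x_1)h'(x_2)$, and the mean value theorem in $\Rbb^2$ gives $|H(\vb[j],\vb[i])-H(\ub[j],\ub[i])|^2 \le L_{hh'}^2\big((\vb[i]-\ub[i])^2+(\vb[j]-\ub[j])^2\big)$ with $L_{hh'}:=\sup\|\nabla H\|_2$. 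Summing over all $d^2$ entries produces $2d\,L_{hh'}^2\|\vb-\ub\|_2^2$, and \emph{that} is where both the $2$ and the $d$ in $\sqrt{2d}$ come from.

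Your product-rule route is perfectly valid for proving Lipschitz smoothness, and arguably more elementary, but your accounting for the constant is off. Triangle inequality on the two product-rule pieces gives a \emph{sum}, not a factor of $\sqrt{2}$; carried through, your argument yields $\sqrt{d}\,\theta_{\max}\|\theta\|_2\big(\|h\|_\infty\|h''\|_\infty+\|h'\|_\infty^2\big)\|u^i\|_2^2$ for the $i$-th contribution. That is a correct Lipschitz bound, but it is $\sqrt{d}$ with your $L_{hh'}:=\|h\|_\infty\|h''\|_\infty+\|h'\|_\infty^2$, not $\sqrt{2d}$ with the paper's $L_{hh'}:=\sup\|\nabla H\|_2$. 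The two constants are comparable (within a factor of $\sqrt{2}$, since $a+b$ and $\sqrt{2}\sqrt{a^2+b^2}$ are), so either bound is acceptable as ``a possible upper bound'' --- just be aware that the specific form $\sqrt{2d}\,L_{hh'}$ in the statement encodes the paper's two-variable MVT mechanism, not your product-rule split.
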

\begin{remark} Before stating the proof, we should stress that assumptions on $h$ is satisfied by most activation functions e.g., sigmoid, sigmoid symmetric, gaussian, gaussian symmetric, elliot, elliot symmetric, tanh, Erf.
\end{remark}
\begin{proof}
Assume that all the gradients in this proof are w.r.t. $W$ then we know that 
\[-\nabla f(W, \theta) [j,k] = \frac{1}{N}\sum\limits_{i=1}^{N}\{v^i-\theta^T h(W u^i )\}h'(W[j,:]u^i ) \theta[j] u^i[k] \] 
Notice that $\|W\|_F = \|\text{vect}(W)\|_2$. Also notice that if $W = ab^T$ then $\|W\|_F = \|a\|_2.\|b\|_2$\\ 
Let us define vector $a^{i}$ s.t. $a^{i}[j] = \theta[j] h'(W[j,:]u^{i})(v^i - \theta^T h(Wu^{i}))$ so we have 
\begin{align}
-(\nabla f(W_1)-\nabla f(W_2))_{jk} &= \frac{1}{N}\sum\limits_{i=1}^{N}u^{i}_k(a^i_1[j]-a^i_2[j]) \nonumber \\
\Rightarrow -(\nabla f(W_1)-\nabla f(W_2)) &= \frac{1}{N}\sum\limits_{i=1}^{N}(a^i_1-a^i_2)u^{i^T} \nonumber \\
\Rightarrow \|\nabla f(W_1)-\nabla f(W_2)\|_F &\le \frac{1}{N}\sum\limits_{i=1}^{N} \big\| u^i \big\|_2. \big\| a^i_1-a^i_2 \big\|_2, \label{first eq for smoothness}
\end{align}
where the last inequality follows from Cauchy-Schwarz inequality.\\
So if we can show Lipschitz constant  $L_i$ on $\big\| a^i_1-a^i_2 \big\|_2, \forall \ i$ then we are done. \\
 Let $\theta_{\max} := \max\limits_j |\theta_j|$, then 
 \begin{align*}
 \Big |(a^i_1-a^i_2)[j]\Big| &= |\theta_j|. \bigg| h'(W_1[j,:]u^i)(v^i - \theta^T h(W_1u^i))-h'(W_2[j,:]u^i)(v^i - \theta^T h(W_2u^i)) \bigg| \\
 & \le \theta_{\max} \bigg| h'(W_1[j,:]u^i)(v^i - \theta^T h(W_1u^i))-h'(W_2[j,:]u^i)(v^i - \theta^T h(W_2u^i)) \bigg| \\
 \Rightarrow \big\| a^i_1 - a^i_2 \big\|_2 &\le \theta_{\max} \bigg\|  v^i \Big( h'(W_1u^i)- h'(W_2u^i) \Big)- \Big( h(W_1u^i) h'(W_1u^i)^T- h(W_2u^i) h'(W_2u^i)^T \Big) \theta \bigg\|_2\\
 &\le \theta_{\max} \bigg\{ \Big\| v^i \Big( h'(W_1u^i)-h'(W_2u^i) \Big)\Big\|_2\\
 &\qquad+ \Big\|(h'(W_1u^{i})h(W_1u^{i})^T-h'(W_2u^{i})h(W_2u^{i})^T) \theta \Big\|_2\bigg\}.
 \end{align*}
 Suppose the Lipschitz constants for the first and second term are $L_{i,L}\text{ and }L_{i,R}$ respectively. Then $L_i = \theta_{\max} (L_{i,L} +L_{i,R})$ and possible upper bound on value of $L$ would become $\frac{1}{N}\sum\limits_{i=1}^{N} \|u^i\|_2L_i$. We now analyse existence of $L_{i,L}$\\
Since the Hessian of scalar function $h(\cdot)$ is bounded so we have $h'(x)$ is Lipschitz continuous with constant $L_{h'}$. Let $r_1, r_2$ be two row vectors then we claim $\|h'(r_1x) -h'(r_2x)\|_2 \le L_{h'} \big\| x \big\|_2 .\big\| r_1- r_2 \big\|_2 , \forall \ r_1,r_2$ because:
\[ \| h'(r_1x)-h'(r_2x) \|_2 \le L_{h'} \big| r_1x-r_2x \big| \le L_{h'} \| x\|_2 \|r_1-r_2\|_2  \]
From the relation above we have the following: \begin{align}
\big\| h'(W_1u^{i}) - h'(W_2u^{i}) \big\|_2^2 &= \sum\limits_{j=1}^{d}\Big( h'(W_1[j,:] u^i) - h'(W_2[j,:]u^i) \Big)^2 \nonumber \\
&\le L_{h'}^2 \big\| u^i \big\|_2^2 \sum\limits_{j=1}^{d}\big\| W_1[j,:] - W_2[j,:] \big\|_2^2 = L_{h'}^2 \big\| u^i \big\|^2_2 \big\| W_1-W_2 \big\|_F^2 \nonumber \\
\Rightarrow L_{i,L} &= L_{h'}\|u^{i}\|_2|v^i|. \label{eq LiL}
\end{align}
Now we focus our attention to second term. Notice the simple fact that 
\begin{equation}\label{norm_relation_vect_frobenius}
\|W_1-W_2\|_2 \le \|W_1- W_2\|_F = \|\text{vect}(W_1-W_2)\|_2.
\end{equation}
Define $\vb := W_1u^{i},  \ub:=W_2u^{i}$, then we have
\begin{equation}\label{vu_relation}
\| \vb - \ub \|_2 = \Big\|(W_1-W_2)u^{i}\Big\|_2 \le \Big\|W_1-W_2\Big\|_2.\Big\|u^{i}\Big\|_2 \le \Big\|u^{i}\Big\|_2. \Big\|\text{vect}(W_1-W_2)\Big\|_2,
\end{equation}
and
\begin{align*}
\Big\| \Big( h'(W_1u^{i})h(W_1u^{i})^T&-h'(W_2u^{i})h(W_2u^{i})^T \Big) \theta\Big\|_2\\
&\le\Big\| \theta \Big\|_2. \Big\|h'(W_1u^{i})h(W_1u^{i})^T-h'(W_2u^{i})h(W_2u^{i})^T\Big\|_2\\
&=\Big\| \theta \Big\|_2. \Big\|h'(\vb)h(\vb)^T-h'(\ub)h(\ub)^T\Big\|_2\\
&\le \Big\| \theta \Big\|_2. \Big\|h'(\vb)h(\vb)^T-h'(\ub)h(\ub)^T\Big\|_F.
\end{align*}
The latter inequality implies that
\begin{align*}
 \Big\|(h'(W_1u^{i})h(W_1u^{i})^T&-h'(W_2u^{i})h(W_2u^{i})^T) \theta \Big\|_2^2 \\
&\le \big\| \theta \big\|_2^2\bigg(\sum\limits_{i,j=1}^{d}h'(\vb[i])h(\vb[j]) - h'(\ub[i])h(\ub[j])\bigg)^2.
\end{align*}
Now let us define a 2-D function $H(x_1,x_2) = h(x_1)h'(x_2)$. Then $\nabla H(x_1,x_2) = \begin{bmatrix}
h'(x_1)h'(x_2)\\h(x_1)h''(x_2)
\end{bmatrix}$ so  under given assumptions, $\|\nabla H(\cdot)\|_2$ is bounded. Let that bound be $L_{hh'}$.\\ 
Now by mean value theorem, we have 
\begin{align}
H(x_1,x_2)-H(y_1,y_2) &=\nabla H(\xi)^T\{(x_1,x_2)-(y_1,y_2)\} \nonumber \\
\Rightarrow \Big| H(x_1,x_2)-H(y_1,y_2) \Big|^2 &\le \Big\| \nabla H(\xi) \Big\|_2^2. \Big\{(x_1-y_1)^2+ (x_2-y_2)^2\Big\} \nonumber \\
& \le L_{hh'}^2\Big\{(x_1-y_1)^2+ (x_2-y_2)^2\Big\}\nonumber \\
\text{So } \Big\| \Big\{ h'(W_1u^{i})h(W_1u^{i})^T&-h'(W_2u^{i})h(W_2u^{i})^T \Big\} \theta \Big\|_2^2 \nonumber \\
&\le \big\| \theta \big\|_2^2 \bigg(\sum\limits_{i,j=1}^{d}h'(\vb[i])h(\vb[j]) - h'(\ub[i])h(\ub[j])\bigg)^2 \nonumber \\
&\le \big\| \theta \big\|_2^2 \sum\limits_{i,j =1}^{d}L_{hh'}^2\big((\vb[i]-\ub[i])^2+ (\vb[j]-\ub[j])^2\big) \nonumber \\
&= 2dL_{hh'}^2 \big\| \theta \big\|_2^2\| \vb - \ub \|_2^2 \label{vu_result}
\end{align}
It then follows from \eqnok{norm_relation_vect_frobenius},\eqnok{vu_relation} and \eqnok{vu_result} that
\begin{align*}
\Big\|(h'(W_1u^{i})h(W_1u^{i})^T&-h'(W_2u^{i})h(W_2u^{i})^T) \theta \Big\|_2 \nonumber\\
 &\le \sqrt{2d}L_{hh'}\big\| \theta \big\|_2 .\big\| u^i \big\|_2. \big\| W_1-W_2 \big\|_F 
\end{align*} 
So you get that $L_{i,R} = \sqrt{2d}L_{hh'}\|\theta\|_2\|u^{i}\|_2$\\
Finally, using \eqnok{first eq for smoothness}, \eqnok{eq LiL} and \eqnok{vu_result}, we get a possible finite upper bound on the value of $L$:  
\[
L \le  \frac{1}{N}\theta_{\max} \Big( L_{h'} \big( \sum\limits_{i=1}^{N} \|u^i \|^2_2 |v^i| \big)+ \sqrt{2d}L_{hh'} \|\theta \|_2 \big( \sum\limits_{i=1}^{N} \| u^i\|^2_2 \big) \Big)
\]
Also note that this bound is valid even if $W$ is not a square matrix.
\end{proof}
\begin{remark}\label{remark_L}
Note that one can easily calculate value of $L$ given data and $\theta$. Moreover, if we put constraints on $\big\|\theta\big\|_2$ then $L$ is constant in every iteration of the algorithm 1. As mentioned in section 3, this will provide an easier way to analyse the algorithm.
\end{remark}
\begin{lemma} \label{lemma_lip_smooth_w}
Assuming that scalar function $h$ is such that $|h(\cdot)| \le u$ then there exists $L_\theta$ s.t. 
\begin{equation}\label{eqLipSmooth_w} 
\big\| \nabla_{w} f(W, \theta_1)-\nabla_{w} f(W, \theta_2)\big \|_2 \le L_\theta \big\| \theta_1 - \theta_2 \big\|_2
\end{equation}
\end{lemma}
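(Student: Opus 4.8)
The plan is to imitate the proof of Lemma~\ref{lemma3.6}, but now holding $W$ fixed and varying $\theta$. As there, write the negative gradient as a sum of rank-one matrices
\[-\nabla_W f(W,\theta) = \frac1N \sum_{i=1}^N a^i(\theta)\, u^{i^T}, \qquad a^i(\theta) := \big(\theta \odot h'(Wu^i)\big)\big(v^i - \theta^T h(Wu^i)\big),\]
where $h'(Wu^i) \in \Rbb^d$ is the fixed vector with $j$-th entry $h'(W[j,:]u^i)$. Using the triangle inequality together with the identity $\|ab^T\|_F = \|a\|_2\|b\|_2$ and Cauchy--Schwarz exactly as in the proof of Lemma~\ref{lemma3.6}, it suffices to produce, for each $i$, a constant $L_i$ with $\|a^i(\theta_1)-a^i(\theta_2)\|_2 \le L_i\,\|\theta_1-\theta_2\|_2$; then $L_\theta = \frac1N\sum_i \|u^i\|_2 L_i$ works.

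The key step is to control $a^i(\theta_1)-a^i(\theta_2)$. Writing $r^i(\theta):= v^i-\theta^T h(Wu^i)$, which is \emph{affine} in $\theta$, and adding and subtracting a cross term gives
\[a^i(\theta_1)-a^i(\theta_2) = \big((\theta_1-\theta_2)\odot h'(Wu^i)\big)\,r^i(\theta_1) + \big(\theta_2 \odot h'(Wu^i)\big)\big(r^i(\theta_1)-r^i(\theta_2)\big).\]
I would then bound the two summands separately. For the first, $\|(\theta_1-\theta_2)\odot h'(Wu^i)\|_2 \le \|h'(Wu^i)\|_\infty\,\|\theta_1-\theta_2\|_2$, while $|r^i(\theta_1)| \le |v^i| + \|h(Wu^i)\|_2\,\|\theta_1\|_2 \le |v^i| + \sqrt{d}\,u\,\|\theta_1\|_2$ using $|h(\cdot)|\le u$. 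For the second, $\|\theta_2\odot h'(Wu^i)\|_2 \le \|h'(Wu^i)\|_\infty\,\|\theta_2\|_2$ and, since $r^i$ is affine, $|r^i(\theta_1)-r^i(\theta_2)| = |(\theta_2-\theta_1)^T h(Wu^i)| \le \sqrt{d}\,u\,\|\theta_1-\theta_2\|_2$. Collecting terms yields $L_i = \|h'(Wu^i)\|_\infty\big(|v^i| + \sqrt{d}\,u\,(\|\theta_1\|_2+\|\theta_2\|_2)\big)$.

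The one genuine subtlety --- and the reason the constant cannot be global --- is that $a^i(\theta)$ is \emph{quadratic} in $\theta$, so its increment is not uniformly bounded by $\|\theta_1-\theta_2\|_2$ unless $\|\theta_1\|_2,\|\theta_2\|_2$ are themselves controlled. This is precisely why we restrict to the bounded set $\RR=\BB^2(R/2)$ from \eqnok{eq_define_RR}: on it $\|\theta_1\|_2,\|\theta_2\|_2 \le R/2$, so $\|\theta_1\|_2+\|\theta_2\|_2 \le R$ and one may take the finite constant
\[L_\theta = \frac1N\sum_{i=1}^N \|u^i\|_2\,\|h'(Wu^i)\|_\infty\big(|v^i| + \sqrt{d}\,u\,R\big).\]
Note that $\|h'(Wu^i)\|_\infty$ is automatically finite for the given fixed $W$ and data whenever $h$ is differentiable (it is a maximum of finitely many values of $h'$ at the fixed points $W[j,:]u^i$), so no boundedness assumption on $h'$ is needed beyond differentiability; the hypothesis $|h(\cdot)|\le u$ enters only through $\|h(Wu^i)\|_2 \le \sqrt{d}\,u$. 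This completes the plan.
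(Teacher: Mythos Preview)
You have misread the statement. The subscript $w$ in $\nabla_w f$ is a typo for $\theta$: the lemma is asserting that the gradient \emph{with respect to the output layer} $\theta$ is Lipschitz in $\theta$, not that $\nabla_W f$ is Lipschitz in $\theta$. This is clear from the paper's own proof, which begins by writing out $\nabla_\theta f(W,\theta) = -\tfrac1N\sum_i (v^i-\theta^T h(Wu^i))\,h(Wu^i)$, and from the role the lemma plays immediately afterward: the constant $L_\theta$ is the smoothness parameter used to set the inner stepsize $\beta_i \le 1/(2L_\theta)$ for the SGD subproblem in $\theta$ (Theorem~\ref{convergence_thm_w}). Your argument instead bounds $\|\nabla_W f(W,\theta_1)-\nabla_W f(W,\theta_2)\|_F$, which is a different quantity and not the one needed for the inner-loop analysis.

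Once one looks at the correct gradient, the proof is much shorter than what you wrote and requires neither the restriction $\theta\in\RR$ nor any dependence on $W$ or the data. Since $\nabla_\theta f(W,\theta)$ is \emph{affine} in $\theta$, the increment factors exactly:
\[
\nabla_\theta f(W,\theta_1)-\nabla_\theta f(W,\theta_2)=\Big(\tfrac1N\tsum_{i=1}^N h(Wu^i)h(Wu^i)^T\Big)(\theta_1-\theta_2),
\]
so $L_\theta$ is just the operator norm of the Hessian $\tfrac1N\sum_i h(Wu^i)h(Wu^i)^T$. Using $\|h(Wu^i)\|_2^2\le d\,u^2$ (from $|h|\le u$) and Weyl's inequality, the paper obtains the clean, data- and $W$-independent constant $L_\theta=u^2 d$. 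This independence is explicitly highlighted after the lemma (``gives us value of $L_\theta$ irrespective of value of $W$ or data'') and is what makes the stepsize choice \eqnok{eq_inner_stepsize} implementable. Your constant, by contrast, depends on $W$, on $\|u^i\|_2$, on $\|h'(Wu^i)\|_\infty$, and on the radius $R$ --- all of which would be unnecessary complications here, and the $W$-dependence in particular would be problematic since $W$ changes across outer iterations.
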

\begin{proof}
Noting that
\[ -\nabla_{\theta}f(W, \theta) = \frac{1}{N} \sum\limits_{i=1}^{N} \{v^i - \theta^Th(Wu^i)\} h(Wu^i), \] 
we have 
\begin{align*}
\big\| \nabla_{\theta} f(W, \theta_1) &- \nabla_w f(W, \theta_2) \big\|_2\\
&= \bigg\| \frac{1}{N} \sum\limits_{i=1}^{N} \Big[ \{v^i - \theta_1^Th(Wu^i)\} h(Wu^i) - \{v^i - \theta_2^Th(Wu^i)\} h(Wu^i) \Big] \bigg\|_2\\
&= \bigg\| \frac{1}{N} \sum\limits_{i=1}^{N} \Big[ \{ -h(Wu^i)h(Wu^i)^T\theta_1 + h(Wu^i)h(Wu^i)^T\theta_2 \} \Big] \bigg\|_2\\
&= \bigg\| \frac{1}{N} \sum\limits_{i=1}^{N}  h(Wu^i)h(Wu^i)^T(\theta_2 - \theta_1) \bigg\|_2\\
&\le \bigg\| \frac{1}{N} \sum\limits_{i=1}^{N}  h(Wu^i)h(Wu^i)^T \bigg\|_2 . \big\| \theta_1 - \theta_2 \big\|_2 \\
&= \frac{1}{N} \Big\|  \sum\limits_{i=1}^{N}  h(Wu^i)h(Wu^i)^T \Big\|_2 . \big\| \theta_1 - \theta_2 \big\|_2 \\
&= \frac{1}{N} \lambda_{\max} \Big( \sum\limits_{i=1}^{N}  h(Wu^i)h(Wu^i)^T \Big) . \big\| \theta_1 - \theta_2 \big\|_2\\
&\le \frac{1}{N} \bigg\{ \sum\limits_{i=1}^{N} \lambda_{\max} \Big( h(Wu^i)h(Wu^i)^T \Big) \bigg\} . \big\| \theta_1 -\theta_2 \big\|_2 &\because \text{Weyl's Inequality}\\
&= \frac{1}{N} \bigg\{ \sum\limits_{i=1}^{N} \big\| h(Wu^i) \big\|^2_2 \bigg\} . \big\| \theta_1 - \theta_2 \big\|_2\\
&\le u^2 d \big\| \theta_1 - \theta_2 \big\|_2
\end{align*}
where $u_1$ and $u_2$ are upper bounds on scalar functions $|h(\cdot)|$ and $|h'(\cdot)|$ respectively and $d$ is row-dimension of $W$.
\end{proof}
Notice that Lemma \ref{lemma_lip_smooth_w} gives us value of $L_\theta$ irrespective of value of $W$ or data. Also observe that $f(W, \cdot)$ is convex function since hessian \[\nabla^2_\theta f(W, \theta) = \frac{1}{N} \sum\limits_{i=1}^{N} h(Wu^i) h(Wu^i)^T,\] which is the sum of positive semidefinite matrices. By Lemma \ref{lemma_lip_smooth_w}, we know that $f(W,\cdot)$ is smooth as well. So we can use following convergence result provided by \cite{L12} for stochastic composite optimization. A simplified proof can be found in appendix.
\begin{theorem} \label{convergence_thm_w} 
Assume that stepsizes $\beta_i$ satisfy $0 < \beta_i \le 1/2L_\theta, \forall \ i \ge 1$. Let $\{\theta^{av}_{i+1}\}_{i\ge 1}$ be the sequence computed according to Algorithm 1. Then we have,
\begin{equation}
\Ebb[ f(W_k, \theta^{av}_{i+1}) - f(W_k, \theta_{W_k}^*)] \le K_0(i), \ \forall \ i \ge 1, \forall \ k \ge 0,
\end{equation}
where $K_0(i) := \Big( \sum\limits_{\tau=1}^{i}\beta_\tau \Big)^{-1} \bigg[ \big\| \wbar{\theta}_1 - \theta^*_{W_k} \big \|^2_2 + \sigma^2 \sum\limits_{\tau =1}^{i}\beta_i^2 \bigg]$
where $\wbar{\theta}_1$ is the starting point for inner iteration and $\sigma$ is defined in (\ref{eq_sigma}). 
\end{theorem}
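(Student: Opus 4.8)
The plan is to run the textbook analysis of projected stochastic gradient descent on the convex, $L_\theta$-smooth function $f(W_k,\cdot)$, exploiting that the oracle \eqnok{stochastic_oracle} is unbiased with variance bounded by $\sigma^2$ as in \eqnok{eq_sigma}. Throughout I fix the outer index $k$, abbreviate $f(\cdot):=f(W_k,\cdot)$ and $\theta^*:=\theta^*_{W_k}$, and write the stochastic gradient at inner step $\tau$ as $G_{W_k}(\wbar{\theta}_\tau,\xi^k_\tau)=\nabla f(\wbar{\theta}_\tau)+\delta_\tau$, where $\delta_\tau:=\xi^k_\tau$ satisfies $\Ebb[\delta_\tau\mid\wbar{\theta}_\tau]=0$ and $\Ebb[\|\delta_\tau\|^2]\le\sigma^2$. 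The convexity of $f$ (its Hessian is a sum of PSD matrices) and its smoothness constant $L_\theta$ (Lemma \ref{lemma_lip_smooth_w}) are the two structural facts I will lean on.

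First I would record the three-point property of the prox-mapping \eqnok{prox_map}: for the update $\wbar{\theta}_{\tau+1}=P_{\wbar{\theta}_\tau}(\beta_\tau G_{W_k}(\wbar{\theta}_\tau,\xi^k_\tau))$ and any $u\in\RR$,
\[ \beta_\tau\inprod{G_{W_k}(\wbar{\theta}_\tau,\xi^k_\tau)}{\wbar{\theta}_{\tau+1}-u}\le\tfrac12\|\wbar{\theta}_\tau-u\|^2-\tfrac12\|\wbar{\theta}_{\tau+1}-u\|^2-\tfrac12\|\wbar{\theta}_{\tau+1}-\wbar{\theta}_\tau\|^2. \]
Next I split the inner product into its true-gradient and noise parts and treat each with the appropriate property: convexity gives $\inprod{\nabla f(\wbar{\theta}_\tau)}{\wbar{\theta}_\tau-u}\ge f(\wbar{\theta}_\tau)-f(u)$, while the descent lemma coming from $L_\theta$-smoothness gives $\inprod{\nabla f(\wbar{\theta}_\tau)}{\wbar{\theta}_{\tau+1}-\wbar{\theta}_\tau}\ge f(\wbar{\theta}_{\tau+1})-f(\wbar{\theta}_\tau)-\tfrac{L_\theta}{2}\|\wbar{\theta}_{\tau+1}-\wbar{\theta}_\tau\|^2$. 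Because $\beta_\tau\le 1/(2L_\theta)$, the residual term $-\tfrac12(1-\beta_\tau L_\theta)\|\wbar{\theta}_{\tau+1}-\wbar{\theta}_\tau\|^2$ leaves at least $-\tfrac14\|\wbar{\theta}_{\tau+1}-\wbar{\theta}_\tau\|^2$ of slack to be used below.

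The crux of the argument, and the step I expect to be the main obstacle, is the noise term $-\beta_\tau\inprod{\delta_\tau}{\wbar{\theta}_{\tau+1}-u}$: since $\wbar{\theta}_{\tau+1}$ is correlated with $\delta_\tau$, this term does not have zero conditional mean as written, so one cannot simply take expectations. The remedy is to split it as $-\beta_\tau\inprod{\delta_\tau}{\wbar{\theta}_\tau-u}-\beta_\tau\inprod{\delta_\tau}{\wbar{\theta}_{\tau+1}-\wbar{\theta}_\tau}$. The first piece is a martingale difference, since $\wbar{\theta}_\tau$ and $\theta^*$ are measurable with respect to the samples drawn strictly before step $\tau$; the second piece is absorbed into the leftover quadratic slack by Young's inequality, $\beta_\tau\|\delta_\tau\|\,\|\wbar{\theta}_{\tau+1}-\wbar{\theta}_\tau\|\le\tfrac14\|\wbar{\theta}_{\tau+1}-\wbar{\theta}_\tau\|^2+\beta_\tau^2\|\delta_\tau\|^2$. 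After this cancellation each step yields
\[ \beta_\tau\big(f(\wbar{\theta}_{\tau+1})-f(u)\big)\le\tfrac12\|\wbar{\theta}_\tau-u\|^2-\tfrac12\|\wbar{\theta}_{\tau+1}-u\|^2-\beta_\tau\inprod{\delta_\tau}{\wbar{\theta}_\tau-u}+\beta_\tau^2\|\delta_\tau\|^2. \]

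Finally I would sum this telescoping inequality over $\tau=1,\dots,i$ with $u=\theta^*$, drop the nonpositive $-\tfrac12\|\wbar{\theta}_{i+1}-\theta^*\|^2$, and take expectation: the martingale-difference sum vanishes by the tower property and $\Ebb[\delta_\tau\mid\wbar{\theta}_\tau]=0$, while $\Ebb[\|\delta_\tau\|^2]\le\sigma^2$ controls the last sum. This bounds $\Ebb\big[\sum_\tau\beta_\tau f(\wbar{\theta}_{\tau+1})\big]-(\sum_\tau\beta_\tau)f(\theta^*)$ by $\tfrac12\|\wbar{\theta}_1-\theta^*\|^2+\sigma^2\sum_\tau\beta_\tau^2$. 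Dividing by $\sum_{\tau=1}^i\beta_\tau$ and invoking Jensen's inequality with convexity, $f(\wbar{\theta}^{av}_{i+1})\le(\sum_\tau\beta_\tau)^{-1}\sum_\tau\beta_\tau f(\wbar{\theta}_{\tau+1})$, delivers the claimed bound $\Ebb[f(W_k,\wbar{\theta}^{av}_{i+1})-f(W_k,\theta^*_{W_k})]\le K_0(i)$; the factor $\tfrac12$ in front of $\|\wbar{\theta}_1-\theta^*_{W_k}\|^2$ only makes the estimate sharper than the stated $K_0(i)$, so the conclusion follows for every $i\ge1$ and every $k\ge0$.
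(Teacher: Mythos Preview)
Your proof is correct and follows essentially the same route as the paper's own argument in the appendix: both combine the three-point prox inequality for $P_{\wbar{\theta}_\tau}$, the descent lemma from $L_\theta$-smoothness, convexity of $f(W_k,\cdot)$, and a Young-type bound to absorb the noise cross-term $\beta_\tau\inprod{\xi^k_\tau}{d_\tau}$, then telescope, take expectations (killing the martingale-difference piece), and apply Jensen to pass to $\wbar{\theta}^{av}_{i+1}$. The only cosmetic difference is ordering: the paper starts from smoothness, rewrites $\gb=\Gb_\tau-\xi^k_\tau$, applies Young with the flexible weight $\tfrac{1-L_\theta\beta_\tau}{2}$ to obtain the noise term $\tfrac{\beta_\tau^2\|\xi^k_\tau\|^2}{2(1-L_\theta\beta_\tau)}$, and only at the very end uses $\beta_\tau\le 1/(2L_\theta)$ to bound this by $\beta_\tau^2\sigma^2$; you instead invoke $\beta_\tau\le 1/(2L_\theta)$ up front to reserve the $\tfrac14\|d_\tau\|^2$ slack and arrive at $\beta_\tau^2\|\delta_\tau\|^2$ directly. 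Both yield the same per-step inequality and the same final bound (indeed with the sharper $\tfrac12\|\wbar{\theta}_1-\theta^*_{W_k}\|^2$, as you note).
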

Now we look at a possible strategy of selecting stepsize $\beta_i$. Suppose we adopt a constant stepsize policy then we have $\beta_i = \beta, \forall \ i \in [N_i]$. Then we have 
\[ \Ebb[ f(W_k, \theta^{av}_{N_i+1}) - f(W_k, \theta_{W_k}^*)] \le \frac{\big\| \wbar{\theta}_1 - \theta^*_{W_k} \big \|^2}{N_i \beta} + \sigma^2 \beta.\]
Now if we choose 
\begin{equation} \label{eq_inner_stepsize}
\beta = \min \Big\{ \frac{1}{2L_\theta}, \sqrt{\frac{1}{N_i \sigma^2}}\Big\},
\end{equation} we get 
\[ \Ebb[ f(W_k, \theta^{av}_{N_i+1}) - f(W_k, \theta_{W_k}^*)] \le \big\| \wbar{\theta}_1 - \theta^*_{W_k} \big \|^2 \Big[ \frac{2L_\theta}{N_i} + \frac{\sigma}{\sqrt{N_i}} \Big] +  \frac{\sigma}{\sqrt{N_i}}.\]
By Lemma \ref{lemma3.6}, the objective function for neural networks is Lipschitz-smooth with respect to the hidden layer, i.e., it satisfies eq (\ref{eqLipSmooth}). Notice that it is equivalent to saying 
\begin{equation}\label{eqLipSmooth2}
\Big| f(W_2, w) - f(W_1,w) -\inprod{\nabla_W f(W_1,w)}{W_2-W_1} \Big| \le \frac{L}{2} \Big\| W_1- W_2 \Big\|_F^2, \quad
\forall\ W_1, W_2 \in \Rbb^{d\times d}.
\end{equation}
Since we have a handle on the smoothness of objective function, we can provide a convergence result for the overall algorithm.
\begin{theorem} \label{theorem_conv_W}
Suppose $\gamma_k < \frac{2}{L}$ then we have 
\begin{equation}\label{gradient_thm1}
\Ebb \bigg[ \min\limits_{k=0,\dots,N} \big\|\nabla f_{W}(W_k, \theta_{k+1})\big\|_F^2 \bigg] \le \frac{ f(W_0,\theta_0) + \sum\limits_{k=0}^{N_o}\big( \sum\limits_{\tau = 1}^{N_i} \beta^k_\tau \big)^{-1}\bigg[ \frac{R^2}{2} + \sum\limits_{\tau=1}^{N_i}\frac{\beta^{k^2}_\tau \sigma^2}{2(1-L_\theta\beta^k_\tau)} \bigg] } {\sum\limits_{k=0}^{N_o}(\gamma_k -L/2\gamma_k^2)},
\end{equation} 
where $R/2$ is the radius of origin centred ball, $\RR$ in algorithm, defined as $ \RR := \{r \in \Rbb^d : \|r\|_2 \le \frac{R}{2}\}$.
\end{theorem}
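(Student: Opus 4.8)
The plan is to marry the Lipschitz-smoothness of $f$ in the hidden-layer variable (Lemma \ref{lemma3.6}, in its descent form \eqnok{eqLipSmooth2}) with the inner-loop guarantee of Theorem \ref{convergence_thm_w}, and then telescope over the outer iterations. First I would apply \eqnok{eqLipSmooth2} with $W_1 = W_k$, $W_2 = W_{k+1}$ and $w = \theta_{k+1}$. Since the outer update is $W_{k+1} = W_k - \gamma_k \nabla_W f(W_k, \theta_{k+1})$, substituting $W_{k+1} - W_k = -\gamma_k \nabla_W f(W_k, \theta_{k+1})$ yields the one-step descent
\begin{equation*}
f(W_{k+1}, \theta_{k+1}) \le f(W_k, \theta_{k+1}) - \Big(\gamma_k - \tfrac{L}{2}\gamma_k^2\Big)\big\|\nabla_W f(W_k, \theta_{k+1})\big\|_F^2,
\end{equation*}
where the coefficient $\gamma_k - \tfrac{L}{2}\gamma_k^2$ is positive precisely because $\gamma_k < 2/L$; this is the origin of the denominator in \eqnok{gradient_thm1}.

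Second, I would control the gap $f(W_k, \theta_{k+1}) - f(W_k, \theta_k)$ produced by the inner loop. That loop runs SGD on the convex, $L_\theta$-smooth map $f(W_k, \cdot)$ starting from $\wbar{\theta}_1 = \theta_k$ and returns $\theta_{k+1}$, so Theorem \ref{convergence_thm_w} (in the sharper appendix form carrying the factor $\tfrac{1}{2(1-L_\theta\beta^k_\tau)}$) gives, after taking expectations,
\begin{equation*}
\Ebb\big[f(W_k, \theta_{k+1})\big] \le f(W_k, \theta^*_{W_k}) + e_k \le f(W_k, \theta_k) + e_k,
\end{equation*}
where $e_k := \big(\sum_{\tau=1}^{N_i}\beta^k_\tau\big)^{-1}\big[\tfrac{R^2}{2} + \sum_{\tau=1}^{N_i}\tfrac{(\beta^k_\tau)^2\sigma^2}{2(1-L_\theta\beta^k_\tau)}\big]$. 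The last inequality uses that $\theta^*_{W_k}$ minimizes $f(W_k,\cdot)$, and the $\tfrac{R^2}{2}$ term arises from bounding the initial distance $\tfrac12\|\theta_k - \theta^*_{W_k}\|_2^2$ by the squared diameter of $\RR = \BB^2(R/2)$.

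Third, the key observation is that setting $c_k := f(W_k, \theta_k)$ makes the two displays telescope: the left side of the descent step is exactly $c_{k+1} = f(W_{k+1}, \theta_{k+1})$, so chaining them gives
\begin{equation*}
\Big(\gamma_k - \tfrac{L}{2}\gamma_k^2\Big)\Ebb\big[\|\nabla_W f(W_k, \theta_{k+1})\|_F^2\big] \le \Ebb[c_k] - \Ebb[c_{k+1}] + e_k.
\end{equation*}
Summing over the outer iterations $k = 0,\dots,N_o$ collapses the $c_k$ terms, and discarding $\Ebb[c_{N_o+1}] \ge 0$ (the objective is a nonnegative sum of squares) leaves $f(W_0, \theta_0) + \sum_k e_k$ on the right. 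Finally I would bound $\Ebb[\min_k \|\nabla_W f(W_k, \theta_{k+1})\|_F^2]$ by the $(\gamma_k - \tfrac{L}{2}\gamma_k^2)$-weighted average of the expected squared gradients—valid since the stepsizes are deterministic and the gradient norms nonnegative—which reproduces \eqnok{gradient_thm1} verbatim.

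I expect the main obstacle to be the bookkeeping of nested expectations rather than any single inequality: one must justify the telescoping through the tower property, conditioning on the outer history $\xi^{[k-1]}$ before averaging over the inner noise $\xi^k_{[N_i]}$, and confirm that each iterate $\theta_k$ lies in $\RR$ so that the diameter bound feeding $e_k$ is legitimate (this holds for $k \ge 1$ since $\theta_k$ is a convex combination of prox outputs landing in $\RR$, and is assumed for the initialization $\theta_0$). A secondary subtlety is matching the constant in $e_k$ with the form stated in Theorem \ref{convergence_thm_w}, which is exactly why the refined stochastic-gradient bound from the appendix is invoked.
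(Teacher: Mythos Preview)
Your proposal is correct and follows essentially the same argument as the paper: the descent inequality from \eqnok{eqLipSmooth2} combined with the inner-loop bound \eqnok{eq_converge_theta_1}--\eqnok{eq_converge_theta_2}, telescoped over $k$ and then averaged. The only cosmetic difference is the order of operations: the paper carries the raw noise terms $\beta^k_\tau\langle\xi^k_\tau,\theta^*_{W_k}-\wbar{\theta}^k_\tau\rangle$ pathwise through the telescoping sum and kills them with a single expectation at the very end, whereas you take conditional expectations iteration by iteration (via the tower property) and work directly with the already-averaged error $e_k$; both routes land on the identical bound.
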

\begin{proof}
We know by lemma \ref{lemma3.6} that $f(\cdot, \theta)$ is a Lipschitz smooth function. So using (\ref{eqLipSmooth2}) we have 
\begin{align} 
f(W_{k+1}, \theta_{k+1}) &\le f(W_k,\theta_{k+1}) + \frac{L}{2} \Big\| \text{vect}(W_{k+1}-W_k)\Big\|^2\nonumber \\
&+ \inprod{\vect{\nabla_W f(W_k,\theta_{k+1})}}{\vect{W_{1_{k+1}}-W_{1_k}}} \nonumber \\
&=f(W_k,\theta_{k+1}) - \big( \gamma_k- \frac{L}{2} \gamma_k^2 \big) \Big\| \vect{\nabla_W f(W_k, \theta_{k+1})} \Big\|^2 \nonumber \\
&\le f(W_k,\theta_k) + \big( \sum\limits_{\tau=1}^{N_i}\beta^{k}_\tau \big)^{-1}\Bigg[ \frac{1}{2} \big\| \theta_k -\theta^*_{W_k} \big\|_2^2 + \sum\limits_{\tau=1}^{N_i}\beta^k_\tau \inprod{\xi^k_\tau}{\theta^*_{W_k}-\wbar{\theta}_\tau^k} \nonumber \\
 &+ \sum\limits_{\tau=1}^{N_i}\frac{\beta^{k^2}_\tau \big\| \xi^k_\tau \big\|^2}{2(1-L_\theta\beta^k_\tau)} \Bigg]- (\gamma_k- \frac{L}{2} \gamma_k^2)\Big\| \text{vect}(\nabla_{W}G(W_k, \theta_{k+1}))\Big\|^2, \label{conv_k_eq}
\end{align}
where the last inequality follows from equation (\ref{eq_converge_theta_1}) and (\ref{eq_converge_theta_2}).\\
From \eqnok{eq_define_RR}, we have $\|\theta \| \le R/2$ so $L$ is constant for each outer iteration. Summing (\ref{conv_k_eq}) from $k=0$ to $N_o$ and dividing both side by $\sum\limits_{k=0}^{N_o}(\gamma_k - \frac{L}{2}\gamma_k^2)$, we get 
\begin{align*}
\min\limits_{k= 0, \dots, N}\Big\| \nabla_{W}&f(W_k, \theta_{k+1})\Big\|_F^2 \le  \sum\limits_{k=0}^{N_o}\frac{(\gamma_k- \frac{L}{2} \gamma_k^2)\Big\| \vect{\nabla_{W_{1}}f(W_k, \theta_{k+1})} \Big\|^2}{\sum\limits_{k=0}^{N_o}(\gamma_k - \frac{L}{2}\gamma_k^2)}\\
&\le \frac{ f(W_0,\theta_0) + \sum\limits_{k=0}^{N_o} \big( \sum\limits_{\tau = 1}^{N_i} \beta^k_\tau \big)^{-1}\Bigg[ \frac{R^2}{2} + \sum\limits_{\tau=1}^{N_i}\bigg\{ \beta^k_\tau \inprod{\xi^k_\tau}{\theta^*_{W_k}-\wbar{\theta}_\tau^k} + \frac{\beta^{k^2}_\tau \big\| \xi^k_\tau \big\|^2}{2(1-L_\theta\beta^k_\tau)} \bigg\} \Bigg] } {\sum\limits_{k=0}^{N_o}(\gamma_k -L/2\gamma_k^2)}.
\end{align*}
Now taking expectation with respect to $\xi^{[N_o]}$ (which is defined in (\ref{eq_def_xi_outer})), we have 
\[\Ebb \bigg[ \inprod{\xi^k_\tau}{\theta^*_{W_k}-\wbar{\theta}_\tau^k} \Big|{\xi^{[k-1]} \cup \xi^k_{[\tau-1]}} \bigg] = 0,\] 
which implies $\Ebb_{\xi^{[N_o]}} \Big[ \inprod{\xi^k_\tau}{\theta^*_{W_k}-\wbar{\theta}_\tau^k} \Big] = 0$. We also have $\Ebb_{\xi^{[N_o]}}\Big[ \big\| \xi^k_\tau \big\|^2 \Big] \le \sigma^2$, and hence
\[ \Ebb \Big[ \min\limits_{k= 0, \dots, N}\Big\| \nabla_{W} f(W_k, \theta_{k+1})\Big\|_F^2 \Big] \le \frac{ f(W_0,\theta_0) + \sum\limits_{k=0}^{N_o}\big( \sum\limits_{\tau = 1}^{N_i} \beta^k_\tau \big)^{-1}\bigg[ \frac{R^2}{2} + \sum\limits_{\tau=1}^{N_i}\frac{\beta^{k^2}_\tau \sigma^2}{2(1-L_\theta\beta^k_\tau)} \bigg] } {\sum\limits_{k=0}^{N_o}(\gamma_k -L/2\gamma_k^2)}. \]
\end{proof}
In view of theorem \ref{theorem_conv_W}, we can derive a possible way of choosing $\gamma_k, \sigma$ and $N_i$ to obtain a convergence result. More specifically, if $N_i = N_o, \sigma = \frac{1}{\sqrt{N_i}}, \gamma_k = \frac{1}{L}$ and $\beta^k_\tau$ is chosen according to (\ref{eq_inner_stepsize}) then we have 
\[ \Ebb \bigg[ \min\limits_{k= 0, \dots, N}\Big\| \nabla_{W}f(W_k, \theta_{k+1})\Big\|_F^2 \bigg] \le \frac{2L\Big( f(W_0,\theta_0) + R^2( L_\theta + 1/2) +1 \Big)}{N_o}\]
Note that since we prove Lipschitz smoothness of objective function, $f(\cdot, \theta)$, we can apply whole host of the algorithms developed in literature for non-convex Lipschitz smooth objective minimization. More specifically, accelerated gradient method such as unified accelerated method proposed by \cite{GLZ15} or accelerated gradient method by \cite{GL16} can be applied in outer iteration. We can also use stochastic gradient descent method for outer iteration. For this, we need a stochastic algorithm that is designed for non-convex and Lipschitz smooth function optimization. Randomized stochastic gradient method, proposed by \cite{GL13}, Stochastic variance reduction gradient method (SVRG) by \cite{RHSPS16} or Simplified SVRG by \cite{AH16} can be employed in outer iteration. Convergence of these new algorithms will follow immediately from the convergence results of respective studies.\\
Value of Lipschitz constant, $L$, puts a significant impact on the running time of the algorithm. Notice that if $L$ increases then correspondingly $N_o$ and $N_i$ increase linearly with $L$. So we need methods by which we can reduce the value of the estimate of $L$. One possible idea would be to use $l_1$-ball for feasible region of $\theta$. More specifically, if $\RR = \BB^1(R/2)$ then we can possibly enforce sparsity on $\theta$ which will allow us to put better bound on $L$.

\bibliographystyle{acm}
\bibliography{paper_new}

\begin{thebibliography}{10}

\bibitem{AH16}
{\sc Allen-Zhu, Z., and Hazan, E.}
\newblock Variance reduction for faster non-convex optimization.
\newblock In {\em Proceedings of the 33rd International Conference on
  International Conference on Machine Learning - Volume 48}, ICML'16,
  pp.~699--707.

\bibitem{B94}
{\sc Barron, A.~R.}
\newblock Approximation and estimation bounds for artificial neural networks.
\newblock {\em Machine Learning\/} (1994), 115--133.

\bibitem{BR88}
{\sc Blum, A., and Rivest, R.~L.}
\newblock Training a 3-node neural network is np-complete.
\newblock In {\em Proceedings of the First Annual Workshop on Computational
  Learning Theory\/} (1988), COLT '88, pp.~9--18.

\bibitem{BG17}
{\sc Brutzkus, A., and Globerson, A.}
\newblock Globally optimal gradient descent for a convnet with gaussian inputs.
\newblock {\em CoRR\/} (2017).

\bibitem{CHMGY14}
{\sc Choromanska, A., Henaff, M., Mathieu, M., Arous, G.~B., and LeCun, Y.}
\newblock The loss surface of multilayer networks.

\bibitem{CW08}
{\sc Collobert, R., and Weston, J.}
\newblock A unified architecture for natural language processing: Deep neural
  networks with multitask learning.
\newblock In {\em Proceedings of the 25th International Conference on Machine
  Learning\/} (2008), ICML '08, pp.~160--167.

\bibitem{GL13}
{\sc Ghadimi, S., and Lan, G.}
\newblock Stochastic first- and zeroth-order methods for non-convex stochastic
  programming.
\newblock {\em SIAM Journal on Optimization\/} (2013), 2341--2368.

\bibitem{GL16}
{\sc Ghadimi, S., and Lan, G.}
\newblock Accelerated gradient methods for nonconvex nonlinear and stochastic
  programming.
\newblock {\em Math. Program. 156\/} (2016), 59--99.

\bibitem{GLZ15}
{\sc Ghadimi, S., Lan, G., and Zhang, H.}
\newblock Generalized uniformly optimal methods for nonlinear programming.
\newblock {\em CoRR\/} (2015).

\bibitem{HV15}
{\sc Haeffele, B.~D., and Vidal, R.}
\newblock Global optimality in tensor factorization, deep learning, and beyond.
\newblock {\em CoRR\/} (2015).

\bibitem{HLS15}
{\sc Hazan, E., Levy, K.~Y., and Shalev-Shwartz, S.}
\newblock Beyond convexity: Stochastic quasi-convex optimization.
\newblock In {\em Proceedings of the 28th International Conference on Neural
  Information Processing Systems - Volume 1\/} (2015), pp.~1594--1602.

\bibitem{KKKS11}
{\sc Kakade, S., Kalai, A.~T., Kanade, V., and Shamir, O.}
\newblock Efficient learning of generalized linear and single index models with
  isotonic regression.
\newblock {\em CoRR\/} (2011).

\bibitem{KS09}
{\sc Kalai, A., and Sastry, R.}
\newblock The isotron algorithm: High-dimensional isotonic regression.

\bibitem{KSH12}
{\sc Krizhevsky, A., Sutskever, I., and Hinton, G.~E.}
\newblock Imagenet classification with deep convolutional neural networks.
\newblock In {\em Proceedings of the 25th International Conference on Neural
  Information Processing Systems - Volume 1\/} (2012), NIPS'12, pp.~1097--1105.

\bibitem{L12}
{\sc Lan, G.}
\newblock An optimal method for stochastic composite optimization.
\newblock {\em Math. Program. 133}, 1-2 (2012), 365--397.

\bibitem{LY17}
{\sc Li, Y., and Yuan, Y.}
\newblock Convergence analysis of two-layer neural networks with relu
  activation.
\newblock {\em CoRR\/} (2017).

\bibitem{SYM17}
{\sc Mei, S., Bai, Y., and Montanari, A.}
\newblock The landscape of empirical risk for non-convex losses.

\bibitem{MDH12}
{\sc Mohamed, A., Dahl, G.~E., and Hinton, G.}
\newblock Acoustic modeling using deep belief networks.
\newblock {\em Trans. Audio, Speech and Lang. Proc.\/} (2012), 14--22.

\bibitem{NJLS09}
{\sc Nemirovski, A., Juditsky, A., Lan, G., and Shapiro, A.}
\newblock Robust stochastic approximation approach to stochastic programming.
\newblock {\em SIAM J. on Optimization 19\/} (2009), 1574--1609.

\bibitem{NBMS17}
{\sc Neyshabur, B., Bhojanapalli, S., McAllester, D., and Srebro, N.}
\newblock A pac-bayesian approach to spectrally-normalized margin bounds for
  neural networks.
\newblock {\em CoRR\/} (2017).

\bibitem{NH17}
{\sc Nguyen, Q.~N., and Hein, M.}
\newblock The loss surface of deep and wide neural networks.
\newblock {\em CoRR\/} (2017).

\bibitem{RHSPS16}
{\sc Reddi, S.~J., Hefny, A., Sra, S., P\'{o}cz\'{o}s, B., and Smola, A.}
\newblock Stochastic variance reduction for nonconvex optimization.
\newblock In {\em Proceedings of the 33rd International Conference on
  International Conference on Machine Learning - Volume 48}, ICML'16,
  pp.~314--323.

\bibitem{SJL17}
{\sc Soltanolkotabi, M., Javanmard, A., and Lee, J.~D.}
\newblock Theoretical insights into the optimization landscape of
  over-parametrized shallow neural networks.
\newblock {\em CoRR\/} (2017).

\bibitem{SC16}
{\sc Soudry, D., and Carmon, Y.}
\newblock No bad local minima: Data independent training error guarantees for
  multilayer neural networks.
\newblock {\em CoRR\/} (2016).

\bibitem{XLS16}
{\sc Xie, B., Liang, Y., and Song, L.}
\newblock Diversity leads to generalization in neural networks.
\newblock {\em CoRR\/} (2016).

\bibitem{CSMBO16}
{\sc Zhang, C., Bengio, S., Hardt, M., Recht, B., and Vinyals, O.}
\newblock Understanding deep learning requires rethinking generalization.
\newblock {\em CoRR\/} (2016).

\end{thebibliography}

\appendix 
\section{Proofs of Auxiliary Results}
In this appendix, we provide proofs for auxiliary results.
\subsection{Proof of Theorem \ref{convergence_thm_w}}
For sake of simplicity of notation, we define $\fb(\cdot) := f(W_k,\cdot), \gb(\cdot):= \nabla \fb(\cdot) = \nabla_\theta f(W_k, \cdot)$ and $G_{W_k}(\wbar{\theta}_\tau, \xi^k_\tau) := \Gb_\tau$. Then from (\ref{stochastic_oracle}) and Algorithm 1 we get \begin{equation} \label{eq_rel_G_g}
\Gb_\tau = \gb(\wbar{\theta}_\tau) + \xi^k_\tau \end{equation}
Also define $d_\tau := \wbar{\theta}_{\tau+1} - \wbar{\theta}_\tau$.\\
Notice that $\wbar{\theta}_{\tau+1}$ is optimal solution to the problem 
\begin{equation} \label{eq_innerstep_optimization}
\min\limits_{u \in \Rbb^d} \beta_\tau\inprod{\Gb_\tau}{u-\wbar{\theta}_\tau} + \frac{1}{2}\big\| u -\wbar{\theta}_\tau \big\|_2^2
\end{equation} 
by simply writing first order necessary condition for problem (\ref{eq_innerstep_optimization}). Also we note that optimization function in (\ref{eq_innerstep_optimization}) is strongly convex with parameter 1. Then we have 
\begin{equation} \label{eq_strongconvexity_innerloop}
\beta_\tau \inprod{\Gb_\tau}{d_\tau} + \frac{1}{2}\big\| d_\tau \big\|_2^2 + \frac{1}{2} \big\| u - \wbar{\theta}_{\tau+1} \big\|^2_2 \le \beta_\tau \inprod{\Gb_\tau}{u - \wbar{\theta}_\tau} + \frac{1}{2}\big\| u -\wbar{\theta}_\tau \big\|_2^2
\end{equation}
We will use eq (\ref{eq_strongconvexity_innerloop}) along with smoothness and convexity of the function $\fb$ to get the final convergence result.
\begin{align}
\beta_\tau \fb(\wbar{\theta}_{\tau+1}) &\le \beta_\tau [ \fb(\wbar{\theta}_{\tau})+\inprod{\gb(\wbar{\theta}_\tau)}{d_\tau} + \frac{L_\theta}{2} \| d_\tau \|^2 ] &\because \text{smoothness} \nonumber\\
&= \beta_\tau[ \fb(\wbar{\theta}_{\tau})+\inprod{\gb(\wbar{\theta}_\tau)}{d_\tau} ] + \frac{1}{2} \| d_\tau \|^2 -\frac{(1-L_\theta \beta_\tau)}{2} \| d_\tau \|^2 \nonumber \\
&= \beta_\tau [ \fb(\wbar{\theta}_{\tau})+\inprod{\Gb_\tau}{d_\tau} ] - \beta_\tau \inprod{\xi^k_\tau}{d_\tau} + \frac{1}{2} \| d_\tau \|^2 -\frac{(1-L_\theta \beta_\tau)}{2} \| d_\tau \|^2 &\because (\ref{eq_rel_G_g}) \nonumber \\
&\le \beta_\tau [ \fb(\wbar{\theta}_{\tau})+\inprod{\Gb_\tau}{d_\tau} ]  + \frac{1}{2} \| d_\tau \|^2 -\frac{(1-L_\theta \beta_\tau)}{2} \| d_\tau \|^2 +  \beta_\tau \| \xi^k_\tau \|.\|d_\tau \| \nonumber \\
&\le \beta_\tau \fb(\wbar{\theta}_{\tau})+\Big[ \beta_\tau \inprod{\Gb_\tau}{d_\tau}   + \frac{1}{2} \| d_\tau \|^2 \Big] + \frac{ \beta_\tau^2 \| \xi^k_\tau \|^2}{2 (1-L_\theta \beta_\tau)} \nonumber \\
&\le \beta_\tau \fb(\wbar{\theta}_{\tau})+ \beta_\tau \inprod{\Gb_\tau}{u - \wbar{\theta}_\tau} + \frac{1}{2} \| u -\wbar{\theta}_\tau \|^2 - \frac{1}{2} \| u -\wbar{\theta}_{\tau+1} \|^2 + \frac{ \beta_\tau^2 \| \xi^k_\tau \|^2}{2 (1-L_\theta \beta_\tau)} &\because (\ref{eq_strongconvexity_innerloop}) \nonumber \\
&= \Big[ \beta_\tau \fb(\wbar{\theta}_{\tau})+ \beta_\tau \inprod{\gb(\wbar{\theta}_\tau)}{u - \wbar{\theta}_\tau} \Big] + \beta_\tau \inprod{ \xi^k_\tau}{u - \wbar{\theta}_\tau} \nonumber\\ 
& + \frac{1}{2} \| u -\wbar{\theta}_\tau \|^2 - \frac{1}{2} \| u -\wbar{\theta}_{\tau+1} \|^2 + \frac{ \| \beta_\tau^2 \xi^k_\tau \|^2}{2 (1-L_\theta \beta_\tau)} \nonumber \\
&\le \beta_\tau \fb(u) + \beta_\tau \inprod{ \xi^k_\tau}{u - \wbar{\theta}_\tau} + \frac{1}{2} \| u -\wbar{\theta}_\tau \|^2 - \frac{1}{2} \| u -\wbar{\theta}_{\tau+1} \|^2 + \frac{ \beta_\tau^2 \| \xi^k_\tau \|^2}{2 (1-L_\theta \beta_\tau)} \label{eq_telescopic_theta}
\end{align}
Last equation is due to convexity of function $\fb$. So using (\ref{eq_telescopic_theta}) we get \begin{equation} \label{eq_converge_theta_1}
\sum\limits_{\tau = 1}^{i} \beta_\tau \big[ \fb(\wbar{\theta}_{\tau+1}) -\fb(\theta^*_{W_k}) \big] \le \frac{1}{2} \| \wbar{\theta}_1 - \theta^*_{W_k}\|^2 + \sum\limits_{\tau=1}^{i} \Big[ \beta_\tau \inprod{ \xi^k_\tau}{\theta^*_{W_k} - \wbar{\theta}_\tau} + \frac{ \beta_\tau^2 \| \xi^k_\tau \|^2}{2 (1-L_\theta \beta_\tau)} \Big] 
\end{equation}
Note that from convexity of $\fb$
\begin{equation}\label{eq_converge_theta_2} \fb(\theta^{av}_{i+1}) - \fb(\theta^*_{W_k}) \le \big( \sum\limits_{\tau=1}^{i}\beta_\tau \big)^{-1} \sum\limits_{\tau= 1}^{i}\Big[ \beta_\tau \big[ \fb(\wbar{\theta}_{\tau+1}) -\fb(\theta^*_{W_k}) \big]  \Big] \end{equation}
Moreover noting definition $\xi^{k}_{[\tau]}$ in (\ref{eq_def_xi_inner}) so we have,
\begin{equation}\label{eq_converge_theta_3} \Ebb \big[ \inprod{\xi^k_\tau}{\theta^*_{W_k} - \wbar{\theta}_\tau} \big| \xi^k_{[\tau-1]} \big] = 0 \end{equation}
and from (\ref{eq_sigma}) we get $\Ebb \big[ \|\xi^k_\tau\|^2 \big] \le \sigma^2$. So using this relation and noting $1- L_\theta \beta_\tau \ge \frac{1}{2}$, (\ref{eq_converge_theta_1}), (\ref{eq_converge_theta_2}) and (\ref{eq_converge_theta_3}) we get the result. 

\end{document}